\let\NAT@parse\undefined
\newcolumntype{C}[1]{>{\Centering}m{#1}}
\renewcommand*{\@opargbegintheorem}[3]{\trivlist
  \item[\hskip \labelsep{\it\quad  #1\ #2:}] {\it(#3)}\ }
\newtheorem{theorem}{Theorem}[section]
\newtheorem{problem}{Problem}
\newtheorem{exmp}{Example}
\newtheorem{defn}[theorem]{Definition}
\newtheorem{definition}[theorem]{Definition}
\newcommand{\subsup}[3]{#1_{#2}^{#3}}
\newcommand{\level}[2]{\phi_{#2}^{#1}}
  \newenvironment{cexmp}[2]
  {\begin{exmp}{\textit{continued} (#2)}}
  {\end{exmp} \addtocounter{exmp}{-1}}
\newcommand{\state}[3]{#1_{#2}^{#3}}
\newcommand{\bhline}{\noalign{\hrule height 1.2pt}}
\newcommand{\hltl}{{\color{black}hierarchical sc-LTL}}  
\newcommand{\ltl}{{\color{black}sc-LTL}}  
\title{Hierarchical Temporal Logic Task and Motion Planning for Multi-Robot Systems}%
\author{Zhongqi Wei$^{*,1}$, Xusheng Luo$^{*,1}$ and Changliu Liu$^{1}$\\
$^{*}$Equal contributions\\
Robotics Institute, Carnegie Mellon University, Pittsburgh, PA 15213, USA\\
\{zhongqi2, xushengl, cliu6\}@andrew.cmu.edu}
\begin{document}
\maketitle
\begin{abstract}
Task and motion planning (TAMP) for multi-robot systems, which integrates discrete task planning with continuous motion planning, remains a challenging problem in robotics. Existing TAMP approaches often struggle to scale effectively for multi-robot systems with complex specifications, leading to infeasible solutions and prolonged computation times. This work addresses the TAMP problem in multi-robot settings where tasks are specified using expressive hierarchical temporal logic and task assignments are not pre-determined. Our approach leverages the efficiency of hierarchical temporal logic specifications for task-level planning and the optimization-based graph of convex sets method for motion-level planning, integrating them within a product graph framework. At the task level, we convert hierarchical temporal logic specifications into a single graph, embedding task allocation within its edges. At the motion level, we represent the feasible motions of multiple robots through convex sets in the configuration space, guided by a sampling-based motion planner. This formulation allows us to define the TAMP problem as a shortest path search within the product graph, where efficient convex optimization techniques can be applied. We prove that our approach is both sound and complete under mild assumptions. To enhance scalability, we introduce a pruning heuristic that reduces the product graph size, enabling efficient planning for high-dimensional multi-robot systems. Additionally, we extend our framework to cooperative pick-and-place tasks involving object handovers between robots. We evaluate our method across various high-dimensional multi-robot scenarios, including simulated and real-world environments with quadrupeds, robotic arms, and automated conveyor systems. Our results show that our approach outperforms existing methods in execution time and solution optimality while effectively scaling with task complexity.

\end{abstract}
\section{Introduction}
\begin{figure}[!th]
    \centering
    \includegraphics[width=1\linewidth]{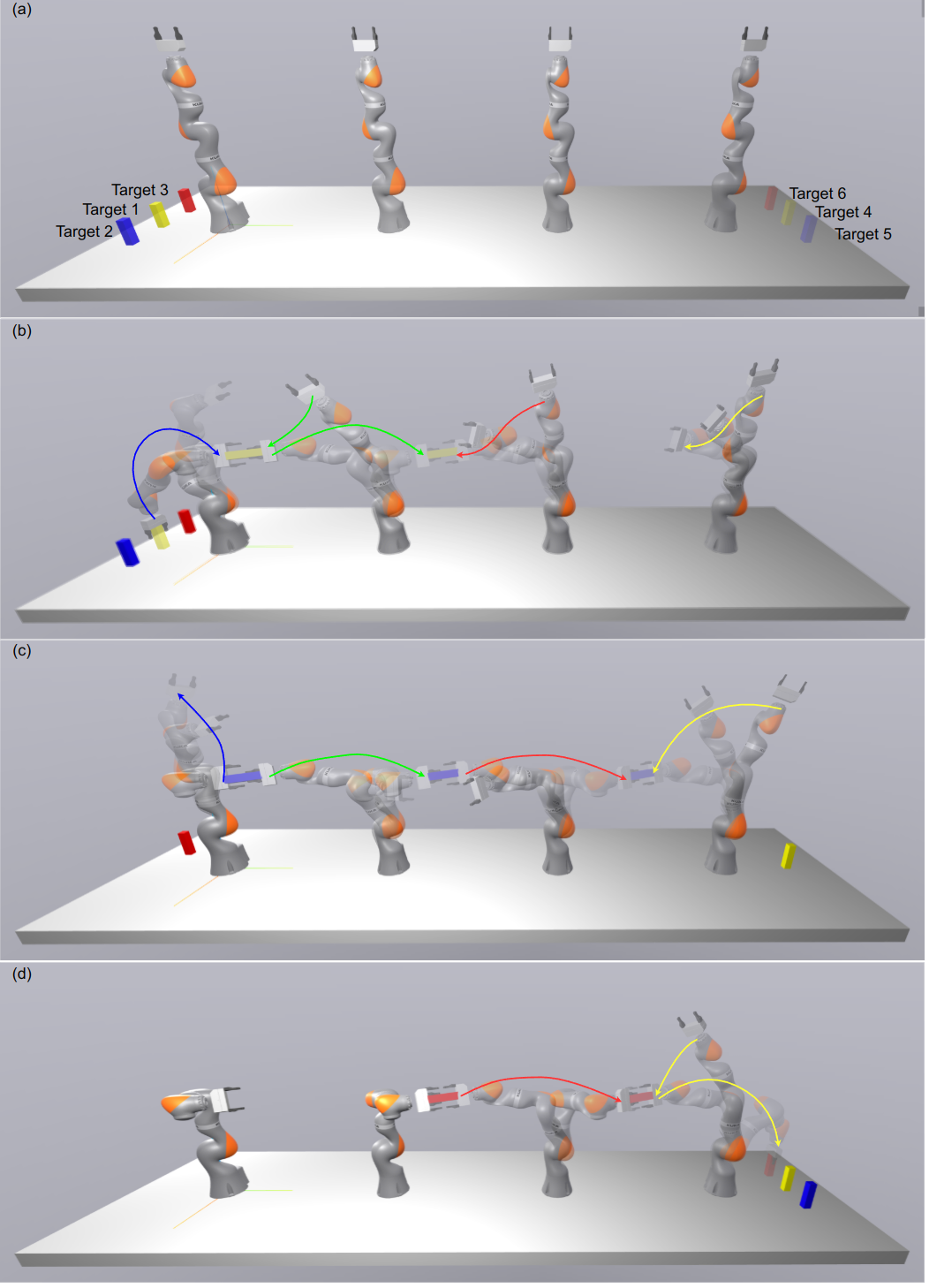}
    \caption{Given the hierarchical temporal logic specifications~\eqref{eq:four-robots HLTL}, which specify transferring the yellow, blue, and red objects in order, our approach efficiently generates collision-free trajectories for four robotic manipulators, with a total of 28 degrees of freedom, to collaboratively complete the task in the shared workspace.}
    \label{fig: four robots handover}
    \vspace{-10pt}
\end{figure}
Multi-robot systems often need to collaborate effectively and manage complex tasks. This has led to a growing demand for planning systems that can enable robots to efficiently execute long-horizon, intricate tasks. This challenge is commonly framed as a task and motion planning (TAMP) problem, formulated as a combination of discrete task planning and continuous motion planning~\cite{zhao2024survey}. TAMP is known as an NP-hard problem, particularly challenging for high-dimensional multi-robot systems tasked with complex and long-horizon operations. Traditionally, researchers in various fields have addressed these problems separately. To simplify the issues, certain assumptions are often employed, such as the existence of low-level controllers for task planning or the use of pre-defined tasks in motion planning. However, in real-world scenarios, predicting the feasibility of motion planning given a specific task specification is difficult. A feasible task description might cause an infeasible motion planning result. Therefore, traditional TAMP approaches focus on finding efficient ways to search the space of tasks~\cite{lagriffoul2014efficiently,toussaint2017multi}.


Recently, several methods have emerged that address these issues by integrating task and motion planning. The combinatorial complexity of TAMP can be partially mitigated if the problem is properly formulated~\cite{envall2023differentiable,takano2021continuous,toussaint2015logic}. However, those methods either scale poorly to complex, long-horizon tasks or struggle with local minima due to the non-convex nature of the problem. A recent study~\cite{kurtz2023temporal} introduced an innovative framework that combines Linear Temporal Logic~\cite{baier2008principles}, as an expressive specification language for long-horizon tasks, with the Graph of Convex Sets (GCS)~\cite{marcucci2023motion}, a near-optimal motion planner. This unified approach demonstrates scalability to high-dimensional systems with up to 30 degrees of freedom. However, the method is limited to single-robot scenarios, as extending it to multi-robot systems is challenging due to the NP-hard nature of task allocation. Additionally, the LTL specifications are computationally intensive. For instance, the task of sequentially collecting five keys and opening five doors in~\cite{kurtz2023temporal} required over 40 minutes, with 32 minutes dedicated solely to handling the LTL specifications, reflecting the double-exponential complexity inherent to this process. More recently, study~\cite{luo2024simultaneous} proposed a hierarchical structure for LTL specifications, significantly reducing the computational burden of handling such specifications. While their planning algorithm claims to scale to scenarios involving up to 30 mobile robots under hierarchical LTL specifications, it does not address collision avoidance and collaboration among robots.


In this work, we address the problem of integrated task and motion planning (TAMP) for multi-robot systems under hierarchical temporal logic specifications, encompassing task allocation, task planning, and motion planning. We exploit the efficiency of hierarchical temporal logic specifications at the task level and the Graph of Convex Sets (GCS) at the motion level, integrating both into a unified framework. Building on ideas from~\cite{kurtz2023temporal}, we formulate TAMP under hierarchical temporal logic specifications for multi-robot systems as a shortest-path problem within a product graph. Our proposed approach tackles several key challenges: converting hierarchical temporal logic specifications into a graph, addressing the task allocation complexities introduced by multiple robots, and efficiently connecting nodes for multiple robots in the graph of convex sets—extending beyond the existing work~\cite{marcucci2023motion} that deals with at most two robots. The shortest-path problem in the product graph is formulated as a mixed-integer convex programming (MICP) problem, which can be efficiently solved using convex relaxation techniques~\cite{marcucci2024shortest}. We theoretically prove that our approach is both sound and complete under mild assumptions, and we empirically demonstrate its efficiency through case studies involving four robotic manipulators. To manage the complexity of the product graph, we implement a pruning strategy based on the structure of the tasks. Additionally, we extend the framework to scenarios involving multiple robotic manipulators that require handovers, where the necessity for handovers is not predetermined. The primary contributions of this work are as follows:
\begin{enumerate}
   \item We formulate the multi-robot hierarchical temporal logic task and motion planning (TAMP) problem as a shortest-path problem in a product graph.
   \item At the task level, we construct a graph for hierarchical temporal logic specifications and encompass the task allocation within the edges. To construct the GCS at the motion level, capturing the dynamics of multi-robot systems, and inspired by sampling-based motion planning, we propose an approach named \textit{IRIS-RRT}, which efficiently connects the motion space.
   \item We develop a heuristic to prune the product graph, significantly improving the computational efficiency of our method.
   \item To address collaborative multi-robot pick-and-place tasks, we adapt the product graph to incorporate handover constraints and solve the shortest-path problem using mixed-integer convex programming (MICP).
   \item We provide theoretical analyses to prove the soundness and completeness of our approach under mild assumptions.
   \item We demonstrate the efficiency of our proposed method for long-horizon tasks, complex task specifications, and high-dimensional systems through several examples in both simulation and hardware. Additionally, we provide open-source code to reproduce our results.
\end{enumerate}


The remainder of this paper is organized as follows. Section \ref{Related Works} reviews related work on task and motion planning and temporal logic specifications. Section \ref{Preliminary} provides the background information on graphs of convex sets, linear temporal logic, and hierarchical specifications. The formulation of the problem and the underlying assumptions are presented in Section \ref{Problem Formulation}. Our main approach to the problem is described in Section \ref{Approach}. Section \ref{Theoretical Analysis} offers proof of the soundness and completeness of the proposed approach. The multi-robot task and motion planning examples are described in Section \ref{Experiments}. Finally, Section \ref{Limitations} discusses the limitations, and Section \ref{Conclusions} provides the conclusion of our approach.
\section{Related Works}
\label{Related Works}

\subsection{Task and Motion Planning}
{The task and motion planning (TAMP) aims to identify a sequence of symbolic actions and corresponding motion plans.} An extensive review of TAMP can be found in~\cite{garrett2021integrated,guo2023recent}. TAMP typically focuses on single-robot scenarios. In this work, we concentrate on multi-robot cases, as the aspect of task allocation is not applicable to a single robot. The primary focus in multi-robot TAMP is on the pick-up and placement of multiple objects by multiple manipulators, with the objective of determining which manipulator should pick up which objects and in what manner. One approach within this category employs search-based methods. This includes Conflict Based Search (CBS)~\cite{motes2020multi}, Monte-Carlo Tree Search (MCTS)~\cite{zhang2022mip}, search in hyper-graphs~\cite{motes2023hypergraph}, and search based on satisfiability modulo theories (SMT) solvers~\cite{pan2021general}. Another approach utilizes optimization-based methods. For example,~\cite{toussaint2015logic} proposed the logic-geometric program (LGP), which integrates continuous motion planning and discrete task specifications into optimization problems. Similarly,~\cite{envalldifferentiable} implicitly assigns actions based on the solution to a nonlinear optimization problem. Our work diverges from multi-robot TAMP in that most TAMP studies do not consider logical or temporal constraints, with only a handful addressing dependency constraints that emerge from handover operations. Our approach incorporates these constraints, adding complexity to task planning and execution.

\subsection{Control Synthesis under Temporal Logic Specifications}\label{sec:lit_mr}

Temporal logic specifications play various critical roles in control synthesis, particularly within the realm of single-robot systems. Primarily, temporal logic formulas are utilized to define task specifications. For instance,~\cite{wang2023temporal} employs temporal logic to articulate temporally extended objectives and to respond to failures during learning from demonstrations.~\cite{pacheck2023physically} implement skill repair mechanisms when existing skills are insufficient to satisfy LTL specifications. Similarly,~\cite{luo2019transfer} focuses on transferring skills across different LTL specifications. Beyond task definition, temporal logic formulas are also effective in representing constraints that dynamical systems must adhere to. For instance,~\cite{feng2024ltldog} introduces LTLDoG, a diffusion-based policy for robot navigation that complies with LTL constraints.~\cite{gu2024walking} utilizes temporal logic to define dynamic constraints, ensuring the stability of walking trajectories in bipedal robots. Additionally,~\cite{le2020single} uses temporal logic to impose constraints on switching protocols, enabling a single agent to robustly track multiple targets.

In the realm of multi-robot systems governed by LTL task specifications or constraints, LTL formulas are generally categorized into {\it local} and {\it global} forms. One strategy, as demonstrated in studies such as~\cite{guo2015multi, tumova2016multi, yu2021distributed}, involves assigning LTL tasks locally to each individual robot within the team. Alternatively, a global LTL specification can be designated for the entire team. When global LTL specifications are employed, they may either explicitly allocate tasks to specific robots~\cite{loizou2004automatic, smith2011optimal, saha2014automated, kantaros2017sampling, kantaros2018distributedOpt, kantaros2018sampling, kantaros2020stylus, kantaros2022perception, luo2019transfer, luo2021abstraction} or leave task assignments unspecified among the robots~\cite{kloetzer2011multi, shoukry2017linear, moarref2017decentralized, lacerda2019petri, chen2024real}. This latter approach aligns with the problem addressed in our current work.

Global specifications that do not explicitly assign tasks to robots typically require decomposition to facilitate task allocation. This decomposition can be achieved through three primary methods: (a) The most common technique, used in works such as~\cite{schillinger2018simultaneous,schillinger2018decomposition,faruq2018simultaneous,robinson2021multiagent,luo2024decomposition,camacho2017non,camacho2019ltl,schillinger2019hierarchical,luo2022temporal,liu2024time}, involves breaking down a global specification into multiple tasks by leveraging the transition relations within the automaton, which graphically represents an LTL formula. (b) As demonstrated in~\cite{shoukry2017linear,sahin2019multirobot}, the second approach utilizes BMC techniques~\cite{biere2006linear} to develop a Boolean Satisfaction or Integer Linear Programming (ILP) model. This model simultaneously handles task allocation and implicitly decomposes tasks within a unified framework. (c) The third method, proposed by~\cite{leahy2022fast}, interacts directly with the syntax tree of LTL formulas to divide the global specification into smaller, more manageable sub-specifications.

However, most of the aforementioned methods either adopt a hierarchical framework, where task allocation is determined first followed by low-level plan synthesis—without ensuring the feasibility at the low level—or they employ a simultaneous task allocation and planning approach to guarantee completeness, but they primarily address discrete environments and action models, which may be suitable for mobile robots but are not applicable to robotic arms. In contrast, our work distinguishes itself by performing simultaneous task allocation and planning while directly considering continuous dynamics.

\section{Preliminary}
\label{Preliminary}
{\bf Notation:} Let $\mathbb{R}$ denote the set of all real values, $[K] = \{1, \ldots, K\}$ represent the set of integers from 1 to $K$, respectively, and $|\cdot|$ denote the cardinality of a set.

\subsection{Linear Temporal Logic}
Linear Temporal Logic (LTL)~\cite{baier2008principles} is a type of formal logic whose basic ingredients are a set of atomic propositions $\pi \in \mathcal{AP}$, the boolean operators, conjunction $\wedge$ and negation $\neg$, and temporal operators, next $\bigcirc$ and until $\mathcal{U}$. LTL formulas over $\mathcal{AP}$ abide by the grammar 
\begin{align}\label{eq:grammar}
\phi::=\text{true}~|~\pi~|~\phi_1\wedge\phi_2~|~\neg\phi~|~\bigcirc\phi~|~\phi_1~\mathcal{U}~\phi_2.    
\end{align}
{For brevity, we abstain from deriving other Boolean and temporal operators, e.g., \textit{disjunction} $\vee$, \textit{implication} $\Rightarrow$, \textit{always} $\square$, \textit{eventually} $\lozenge$, which can be found in \cite{baier2008principles}.} 

An infinite \textit{word} $\sigma$ over the alphabet $2^{\mathcal{AP}}$ is defined as an infinite sequence  $\sigma=\sigma_0\sigma_1\ldots\in (2^{\mathcal{AP}})^{\omega}$, where $\omega$ denotes an infinite repetition and $\sigma_k\in2^{\mathcal{AP}}$, $\forall k\in\mathbb{N}$. The language $\texttt{Words}(\phi)=\left\{\sigma|\sigma\models\phi\right\}$ is defined as the set of words that satisfy the LTL formula $\phi$, where $\models\subseteq (2^{\mathcal{AP}})^{\omega}\times\phi$ is the satisfaction relation. In this work, we focus on a particular subset of LTL formulas known as syntactically co-safe LTL, or sc-LTL for short~\cite{kupferman2001model}. As established by~\cite{kupferman2001model}, any LTL formula encompassing only the temporal operators $\Diamond$ and $\mathcal{U}$ and written in positive normal form (where negation is exclusively before atomic propositions) is classified under syntactically co-safe formulas. Sc-LTL formulas can be satisfied by finite sequences followed by infinite repetitions. This characteristic makes sc-LTL apt for modeling and reasoning about systems with finite durations, such as those found in the robotics field. Any sc-LTL formula can be converted into a Deterministic Finite Automaton (DFA).
\begin{defn}[Deterministic Finite Automaton (DFA) ~\cite{baier2008principles}]
A DFA $\ccalA$ of a sc-LTL formula $\phi$ over $2^{\mathcal{AP}}$ is defined as a tuple $\ccalA(\phi)=\left(\ccalQ, \Sigma, \delta, q_0, \mathcal{Q}_\ccalB^F\right)$, where 
\begin{itemize}
    \item $\ccalQ$ is the set of states;
    \item $\Sigma=2^{\mathcal{AP}}$ is an alphabet;
    \item $\delta\, \subseteq\, \ccalQ\times \Sigma\times\ccalQ$ is the transition relation with $|\delta(q, \sigma)| \leq 1 $ for all states $ q \in \ccalQ $ and all symbols $ \sigma \in \Sigma $;
    \item $q_0 \in \ccalQ$ is the unique initial state;
    \item $\ccalQ^F\subseteq\ccalQ$ is a set of accepting states.
\end{itemize} 

A \textit{finite run} $\rho$ of $\ccalA$ over a finite word $\sigma=\sigma_0 \sigma_1\dots\sigma_h$ is a sequence $\rho=q_0q_1 \dots q_{h+1}$ such that $(q_{i},\sigma_i,q_{i+1})\in \delta$, $\forall i = 0, \ldots, h$. A run $\rho$ is called \textit{accepting} if $ q_{h+1}  \in  \ccalQ^F$. The words $\sigma$ that produce an accepting run of $\ccalA$ constitute the accepted language of $\ccalA$, denoted by $\ccalL_{\ccalA}$. Then~\cite{baier2008principles} proves that the accepted language of $\ccalA$ is equivalent to the words of $\phi$, i.e., $\ccalL_{\ccalA}=\texttt{Words}(\phi)$.

\end{defn}
\subsection{Hierarchical sc-LTL}
The work~\cite{luo2024simultaneous} incorporates a hierarchical structure into LTL over finite traces. In this paper, we focus on hierarchical sc-LTL, following~\cite{luo2024simultaneous}, which indicates that the approach can also be applied to sc-LTL.

\begin{defn}[Hierarchical sc-LTL~\cite{luo2024simultaneous}]\label{def:hltl}
Hierarchical sc-LTL is structured into $K$ levels, labeled $L_1, \ldots, L_K$, arranged from the highest to the lowest. Each level $L_k$ with $k \in [K]$ contains $n_k$ \ltl\ formulas. The \hltl\ specification is represented as $\Phi = \left\{\level{i}{k} \,|\, k \in [K], i \in [n_k]\right\}$, where $\level{i}{k}$ denotes the $i$-th \ltl\ formula at level $L_k$. Let $\Phi_k$ denote the set of formulas at level $L_k$, and let $\texttt{Prop}(\level{i}{k})$ represent the set of propositions appearing in formula $\level{i}{k}$. The \hltl\ follows these rules:
\begin{enumerate}
    \item \label{cond:highest} There is exactly one formula at the highest level: $n_1 = 1$.
    \item \label{cond:inclusion} Each formula at level $L_k$ consists either entirely of atomic propositions, i.e., $\texttt{Prop}(\level{i}{k}) \subseteq \ccalA\ccalP$, or entirely of formulas from the next lower level, i.e., $\texttt{Prop}(\level{i}{k}) \subseteq \Phi_{k+1}$.
    \item \label{cond:union} Each formula at level $L_{k+1}$ appears in exactly one formula at the next higher level: $\level{i}{k+1} \in \bigcup_{j \in [n_k]} \texttt{Prop}(\level{j}{k})$ and $\texttt{Prop}(\level{j_1}{k}) \cap \texttt{Prop}(\level{j_2}{k}) = \varnothing$, for $j_1, j_2 \in [n_k]$ and $j_1 \not= j_2$.
\end{enumerate}
\end{defn}

\begin{exmp}[Hierarchical sc-LTL]\label{exmp:h-sc-ltl}
The following hierarchical sc-LTL specifications state that completing tasks $a$ and $b$ and $c$, and $c$ should not be the last one to be finished:
  \begin{align}
     L_1: \quad &  \level{1}{1} =  \Diamond \level{1}{2} \wedge \neg \level{1}{2} \,\mathcal{U} \, \level{2}{2} \nonumber \\
    L_2: \quad &  \level{1}{2} = \Diamond a \wedge \Diamond b \label{eq:h-sc-ltl}\\
                & \level{2}{2} = \Diamond c \nonumber.
    \end{align}
\end{exmp}

The symbol $\level{i}{k}$ is referred to as  {\it composite proposition} if it is inside a formula, and is referred to as specification otherwise.

\begin{defn}[Specification hierarchy tree~\cite{luo2024simultaneous}]  The specification hierarchy tree, denoted as \(\ccalG_h = (\ccalV_h, \ccalE_h)\), is a tree where each node represents a specification within the \hltl, and an edge \((u, v)\) indicates that specification \(u\) contains specification \(v\) as a composite proposition. Any \hltl\ specifications can be turned into a specification hierarchy tree.
\end{defn}

A specification is termed as a leaf specification if the associated node in the graph \(\ccalG_h\) does not have any children.  Let $\Phi_{\ell}$ denote the set of leaf specifications. Note that not all leaf specifications necessarily reside at level $L_K$.

\subsection{Shorest paths in Graphs of Convex Sets (GCS)}
In this section, we introduce the shortest path formulation in GCS. It is introduced in \cite{marcucci2024shortest} and applied to robot motion planning problems in \cite{marcucci2023motion}. The goal is to find the minimum-cost path from a start vertex to a target vertex in a graph.~\cite{marcucci2024shortest} defines a Graph of Convex Sets as a directed graph $\ccalG = (\ccalV, \ccalE)$ with vertices $\ccalV$ and edges $\ccalE$. Each vertex $v \in \ccalV$ is associated with a convex set
$\ccalS_v$ and a point $s_v \in \ccalS_v$, each edge $e = (u,v) \in \ccalE$ is associated with a non-negative and convex function $l_e(s_u, s_v)$ and a convex constraint $(s_u, s_v) \in \ccalS_e$. For a fixed start vertex $s$ and target vertex $t$, we are seeking a path $p$ as a sequence of vertices that connect the start vertex $s$ and $t$ through the subset $\ccalE_p$ of the edges $\ccalE$. Denoting the set of all paths in the graph $\ccalG$ as $\ccalP$, the shortest path problem in GCS states as follows: 
\begin{equation}\label{eq:gcs}
\begin{aligned}
& {\text{min}}
& & \sum \limits_{e=(u,v) \in \ccalE_{p}}l_e(s_u, s_v)\\
& \text{s.t.}
& & p \in  \ccalP,\\
&&& s_v \in \ccalS_v, \quad \forall v \in p, \\
&&&  (s_u, s_v) \in \ccalS_e , \quad \forall e = (u,v)\in \ccalE_p.\\
\end{aligned}
\end{equation}

Although the shortest path problem (SPP) in the GCS is NP-hard, an efficient mixed-integer convex program (MICP) formulation was proposed in~\cite{marcucci2024shortest}. This MICP has a very tight convex relaxation, meaning the optimal result can be tightly approximated by the solution of the convex optimization. The GCS has been further extended to robotic motion planning around obstacles, as detailed in \cite{marcucci2023motion}. The results demonstrate that GCS is a robust trajectory optimization framework, capable of encoding various costs and constraints. 

\subsection{Convex Set (CS)-based Transition System}

For simplicity, we assume that the configuration space dimension for each robot is the same, denoted by $d$. For a multi-robot system composed of $n$ robots,  the CS-based transition system is defined as follows.

\begin{defn}[CS-based Transition System]\label{def:gcs_pts}
    A CS-based multi-robot transition system (TS) is defined as $\ccalT = (\ccalS, \Delta, S_{0}, \ccalL)$: 
    \begin{itemize}
        \item $\ccalS \subset \mathbb{R}^{nd}$ represents the set of convex sets of configuration states where all robots are guaranteed to be collision-free.
        \item $\Delta \subseteq \ccalS \times \ccalS$ is the transition relation, where $(S, S') \in \Delta$ if the sets $S$ and $S'$ are either overlapping or adjacent.
        \item $\ccalL: \ccalS \to 2^{\ccalA\ccalP}$ is the labeling function, which maps any two configuration states within the same convex set to the same sets of labels, i.e., $\ccalL(s) = \ccalL(s') = \ccalL(S) $ for all $S \in \ccalS$ and for any $s, s' \in S$. With a slight abuse of notation, we apply $\ccalL$ to a convex set and any state within the convex set.
        \item $S_{0} \in \ccalS$ is the convex set that includes the initial configuration state $(s^{0}_{1}, \ldots, s^{0}_{n})$.
    \end{itemize}
\end{defn}

Given the presence of multiple specifications in hierarchical sc-LTL, a state-specification plan associates each robot state with a sc-LTL specification that a specific robot is executing. 
\begin{defn}[State-Specification Sequence~\cite{luo2024simultaneous}]\label{defn:state_specification}
A state-specification sequence with a horizon \( h \), represented as \( \tau \), is a timed sequence \( \tau = \tau_0\tau_1\tau_2 \ldots\tau_h \). Here, \( \tau_i = ((\state{s}{1}{i}, \state{\psi}{1}{i}), (\state{s}{2}{i}, \state{\psi}{2}{i}), \ldots, (\state{s}{n}{i}, \state{\psi}{n}{i})) \) is the collective state-specification pairs of \( n \) robots at the \( i \)-th timestep, where \( (\state{s}{1}{i}, \ldots, \state{s}{n}{i} )\) is the configuration state, and \( \state{\psi}{r}{i} \in \Phi_{\ell} \cup \{\epsilon\} \), with \( \epsilon \) indicating the system's non-involvement in any leaf specification at that time.
\end{defn}

A state-specification sequence is considered to satisfy the given hierarchical sc-LTL specifications if the root specification $\level{1}{1}$ is fulfilled; we refer the reader to~\cite{luo2024simultaneous} for further details.

\section{Problem Formulation}
\label{Problem Formulation}
In this section, we introduce the problem formulation for multi-robot hierarchical temporal logic task and motion planning.
\begin{defn}[Trajectory]
    A trajectory for robot $i$ defined as $\rho_i: \mathbb{R}^+ \to \mathbb{R}^d$ is a function that maps any $t\in \mathbb{R}^+$ in time to a robot configuration $s \in \mathbb{R}^d$.
\end{defn}

\begin{problem}
\label{problem1}
Consider a $n$-robot system with initial configuration $s_0 = (s_1^0, \ldots, s_n^0)  \in \mathbb{R}^{nd}$, and hierarchical sc-LTL specifications $\Phi$, the hierarchical temporal logic task and motion planning problem requires finding the collision-free robot trajectories $\rho = [\rho_1, \ldots, \rho_n]$ with minimum-cost that satisfy the hierarchical sc-LTL specifications. The planning problem is shown as follows:
\begin{subequations}
\begin{align}
    & \underset{\rho = [\rho_1, \ldots, \rho_n]}{\text{min}}
    & & \texttt{J}(\rho) \label{eq:lt1}\\
    & \text{s.t.}
    & & \texttt{Trace} (\rho) \models \Phi, \label{eq:lt2}\\
    &&&  \rho_i(t) \cap \rho_j(t) = \emptyset,\quad \forall i, j \in [n], t \in \mathbb{R}^+, \label{eq:lt3}\\
    &&& \rho_i(t) \cap \mathcal{O} = \emptyset,\quad \forall i \in [n], t \in \mathbb{R}^+,  \label{eq:lt4}\\
    &&& \rho_i(0) = s_i^0, \quad \forall i \in [n]. \label{eq:lt5}
\end{align}
\end{subequations}
where $\mathcal{O}$ represents obstacles, and $\texttt{Trace}$ returns the trace of trajectories by applying labeling function $\ccalL$ to each state in the trajectories $\rho$.

We assume the cost $\texttt{J}$ in~\eqref{eq:lt1} is smooth and strictly convex. It can be any convex function of trajectory $\rho$. For example, the cost can be the path length and include derivatives of states. The first constraint~\eqref{eq:lt2} ensures the trajectories satisfy the task specification expressed as hierarchical sc-LTL specifications. The second constraint~\eqref{eq:lt3} and third constraint~\eqref{eq:lt4} ensure non-convex collision-avoidance constraints, requiring the robot to avoid collisions with itself and the surrounding environment. To deal with those non-convex collision-avoidance constraints, inspired by trajectory optimization~\cite{marcucci2023motion}, we mitigate those collision avoidance constraints by requiring the robot to move through a collection of safe convex sets $S_1, S_2,...\subset \mathbb{R}^{nd}$ that do not collide with obstacles. The last constraint~\eqref{eq:lt5} imposes initial conditions for each robot. 


\end{problem}
\section{Approach}
\label{Approach}
In this section, we present our approach to Problem~\ref{problem1}. The basic idea of our method involves several key steps: First, we construct the labeled convex set regions in configuration space to apply hierarchical sc-LTL specification to multi-robot motion planning. To ensure the existence of a feasible path between these convex set regions, we proposed a rapidly exploring random tree (RRT)-guided method to construct the connected convex set regions to connect those labeled convex set regions. Using these convex sets and their labels, we then construct the CS-based transition system for multi-robots, as introduced in Section~\ref{Construct the Convex Sets Region for Multi-robot System}. Second, we convert each hierarchical sc-LTL formula into a Deterministic Finite Automaton (DFA) and create a product graph by taking the product of the CS-based transition system and the DFAs, as described in Section~\ref{Construct Product Automaton}.  Note that while our approach can model robot collaboration and collision avoidance, these aspects are not considered in~\cite{luo2024simultaneous}. Next, to mitigate the computational complexity associated with the potentially large product graph, we implement a graph pruning technique to simplify the problem based on task specifications, as outlined in Section~\ref{Prune PA}. Finally, in Section~\ref{Optimization Formulation and Extension for Multi-robot Handover}, we solve the pruned product graph using the mixed integer convex program (MICP) and extend our optimization framework to handle multi-robot handover tasks. The overall architecture of our multi-robots task and motion planning algorithm is illustrated in Fig.~\ref{fig:Structure}. For a static environment, the CS-based transition system can be precomputed offline, while the remaining modules are computed online.

\begin{figure}[!t]
    \centering
    \includegraphics[width=1\linewidth]{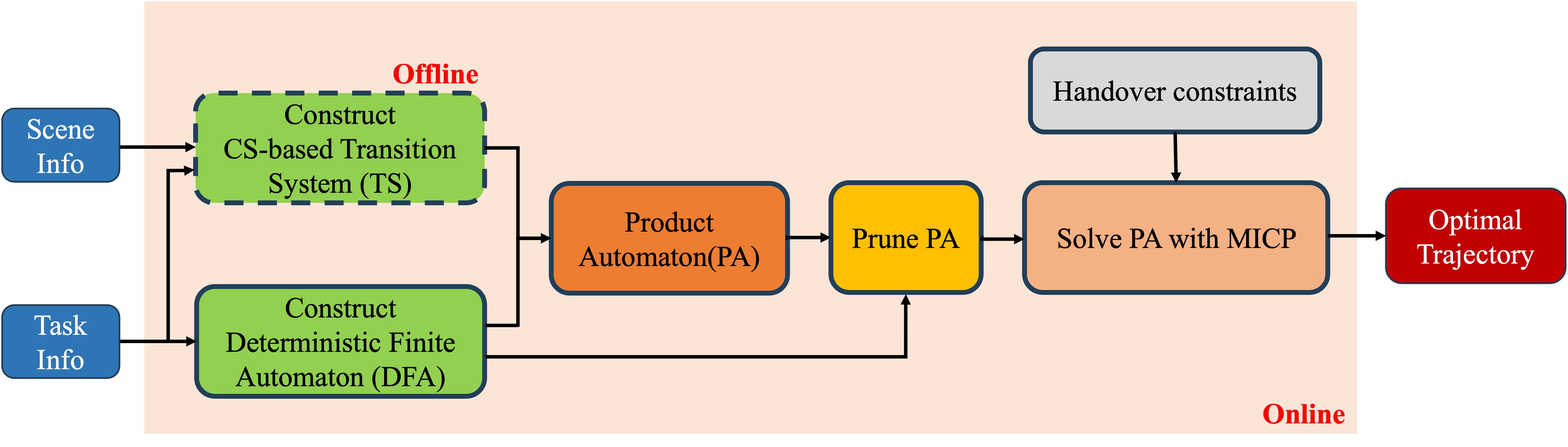}
    \caption{An architecture for hierarchical temporal logic task and motion planning, where the transition system can be precomputed offline.}
    \label{fig:Structure}
    \vspace{-10pt}
\end{figure}

\subsection{Construct CS-based Transition System for Multi-robots}
\label{Construct the Convex Sets Region for Multi-robot System}

To build a CS-based transition system defined in Def.~\ref{def:gcs_pts}, we begin by constructing labeled convex sets, represented as $S_{\text{label}}$, for each atomic proposition $\pi \in \ccalA \ccalP$, ensuring that $\ccalL (S_{\text{label}}) = \pi$. Subsequently, given any two labeled convex sets $S_{\text{label}}^i$ and $S_{\text{label}}^j$, we create a collection of {\it connected} convex sets, denoted as $\ccalS_{\text{connect}}^{i,j}$ to establish a feasible pathway between $S_{\text{label}}^i$ and $S_{\text{label}}^j$, if possible. 

\begin{algorithm}[!t]
\caption{\textbf{Construct a labeled convex set}}
\label{alg: labeled Convex Sets Region Construction}

\KwIn{Multi-robot system $\mathsf{plant}$, \\ \hspace{1cm}  atomic proposition $\pi$}
\KwOut{Labeled joint configuration $s_{\text{label}}$, labeled convex set $S_{\text{label}}$}
$s_{\text{label}} \gets \texttt{CalLabeledConfiguration}(\pi)$~\label{epa:cal_s_label}\;
$S_{\text{label}} \gets \texttt{IRIS-NP}(\mathsf{plant}, s_{\text{label}})$~\label{epa:IRIS}\;

\Return $s_{\text{label}}, S_{\text{label}}$\;
\end{algorithm}

The process of constructing labeled convex sets, as outlined in Alg.~\ref{alg: labeled Convex Sets Region Construction}, proceeds as follows: Initially, for a multi-robot system and a given atomic proposition $\pi$, we compute a configuration $s_{\text{label}}$ that satisfies the specified atomic proposition $\pi$ [line~\ref{epa:cal_s_label}]. This configuration is typically determined through robot inverse kinematics. Following this, we generate a labeled convex set $S_{\text{label}}$ starting with $s_{\text{label}}$ as the seed point, meaning that this configuration is contained within the convex set. In this work, convex sets are constructed using the IRIS-NP algorithm~\cite{petersen2023growing}, which ensures that any configuration within the convex set is free from collisions. However, IRIS-NP does not guarantee that every configuration in the convex set satisfies the atomic proposition $\pi$. To address this limitation, we incorporate additional configuration constraints in the IRIS-NP algorithm to ensure that all configurations within $S_{\text{label}}$ satisfy the atomic proposition $\pi$ [line~\ref{epa:IRIS}].  An example of this algorithm applied to a four-robot manipulator system is depicted in Fig.~\ref{fig: label_GCS}. One such additional constraint is requiring a specific robot’s end-effector to reach a designated position.




\begin{figure}[!t]
    \centering
     \subfigure[]{
      \label{fig:convex_set_a}
      \includegraphics[width=0.285\linewidth]{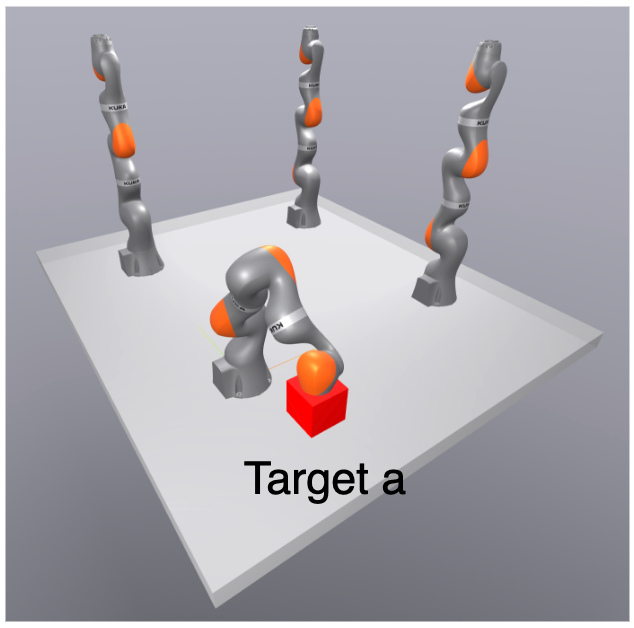}}
     \subfigure[]{
      \label{fig:convex_set_b}
      \includegraphics[width=0.3\linewidth]{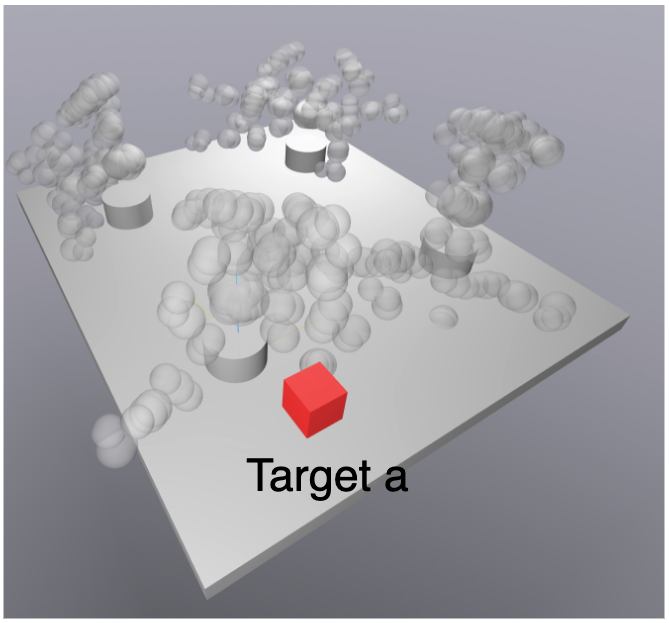}}
     \subfigure[]{
      \label{fig:convex_set_c}
      \includegraphics[width=0.295\linewidth]{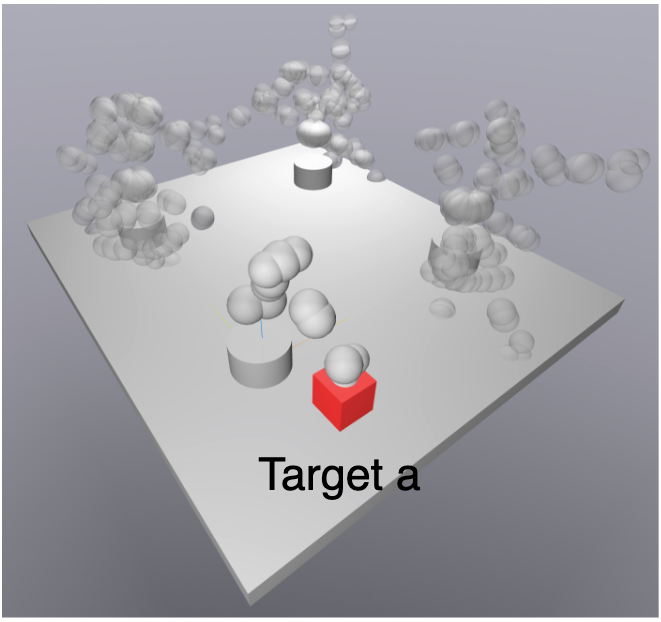}}
    \caption{An example of generating the labeled convex region for a multi-robot system. The atomic proposition is $\mathsf{target\ a}$. In Fig.~\ref{fig:convex_set_a}, the atomic proposition $\pi$ is used to compute the labeled joint configuration $s_{\text{label}}$ through robot inverse kinematics. In this configuration, the bottom robot reaches the position labeled as $\mathsf{target\ a}$, while the configurations of the remaining robots are unconstrained. Using $s_{\text{label}}$ as a seed point, the IRIS-NP algorithm can generate a convex set region that contains this seed point. The sampled configurations inside the convex region are shown in Fig.~\ref{fig:convex_set_b}. Note that not all configurations of the bottom robot ensure reaching $\mathsf{target\ a}$. Fig.~\ref{fig:convex_set_c} illustrates the labeled convex region $S_{\text{label}}$ generated by adding the bottom robot's end-effector position constraints in IRIS-NP algorithm, ensuring that all configurations in the labeled convex region satisfy the atomic proposition $\mathsf{target\ a}$.}
    \label{fig: label_GCS}
    \vspace{-10pt}
\end{figure}

\begin{algorithm}[!t]
\caption{\textbf{IRIS-RRT}}
\label{alg: RRT based Convex Sets Region Construction}

\KwIn{Multi-robot system $\mathsf{plant}$, start configuration $s_{\text{label}}^i$ and target configuration $s_{\text{label}}^j$}
\KwOut{Sets of connected convex sets $\ccalS_{\text{connect}}^{i,j}$}

$\textsf{path} \gets \texttt{RRT}(s_{\text{label}}^i, s_{\text{label}}^j)$~\label{epa:rrt_path}\; 
$\ccalS_{\text{connect}}^{i,j} \gets \texttt{IRIS-NP}(\textsf{plant}, s_{\text{label}}^i)$~\label{epa:convex_region_start}\;
$s_{\text{seed}}^{\text{old}} \gets s_{\text{label}}^i$\;

\While{$s_{\text{label}}^j$ \textbf{is not in convex sets} $\ccalS_{\text{connect}}^{i,j}$\label{epa:for}}{
    $\mathop{\textbf{maximize}}\limits_{s_{\text{seed}}^{\text{new}}}$  \texttt{distance\_along\_path}$(s_{\text{seed}}^{\text{old}}, s_{\text{seed}}^{\text{new}})$~\label{epa:optimization1}\;
    \textbf{subject to} $s_{\text{seed}}^{\text{new}} \in \ccalS_{\text{connect}}^{i,j}$ and $s_{\text{seed}}^{\text{new}} \in \mathsf{path}$~\label{epa:optimization2}\;
    $S_{\text{iris}} \gets \texttt{IRIS-NP}(\textsf{plant}, s_{\text{seed}}^{\text{new}})$~\label{epa:construct_connect1}\;
    $\ccalS_{\text{connect}}^{i,j} \gets \ccalS_{\text{connect}}^{i,j} \cup \{S_{\text{iris}}\}$~\label{epa:construct_connect2}\;
    $s_{\text{seed}}^{\text{old}} \gets s_{\text{seed}}^{\text{new}}$~\label{epa:construct_connect3}\;
}

\Return $\ccalS_{\text{connect}}^{i,j}$\;
\end{algorithm}

After generating the labeled convex sets $\ccalS$, we construct a sequence of connected convex sets $\ccalS_{\text{connect}}^{i,j}$ to ensure connectivity between any two labeled convex sets, if possible. For high-dimensional degree-of-freedom (DoF) multi-robot systems, generating these connected convex regions randomly in the robot configuration space might not successfully establish connections between the labeled convex sets, and the optimal solution might not traverse these convex sets. To address this challenge, we introduce the IRIS-RRT algorithm, outlined in Alg.~\ref{alg: RRT based Convex Sets Region Construction}. Initially, Alg.~\ref{alg: RRT based Convex Sets Region Construction} uses Rapidly-exploring Random Tree (RRT) method~\cite{lavalle2001rapidly} to find a feasible path between the two labeled configurations [line~\ref{epa:rrt_path}]. Subsequently, it constructs a set of connected convex sets along this path, spacing the seed configuration states at maximum intervals along the feasible path to minimize the number of connected convex sets [lines~\ref{epa:for}-\ref{epa:construct_connect3}]. To this end, starting with the initial configuration state $s_{\text{label}}^i$, its convex set $S_{\text{label}}^i$ is used to initialize $\ccalS_{\text{connect}}^{i,j}$. The next step involves identifying another configuration state along the feasible path that is the farthest from $s_{\text{label}}^i$ yet still within $\ccalS_{\text{connect}}^{i,j}$. This configuration state then serves as a new seed, and the process is iteratively continued to extend $\ccalS_{\text{connect}}^{i,j}$ until it contains the goal configuration $s_{\text{label}}^j$. Note that the RRT method in Alg.~\ref{alg: RRT based Convex Sets Region Construction} could be replaced by any motion planning algorithm. A 2D example of the construction of convex sets by IRIS-RRT is shown in Fig.~\ref{fig: 2D example for convex region construction}.

\begin{figure}[!t]
    \centering
    \includegraphics[width=1\linewidth]{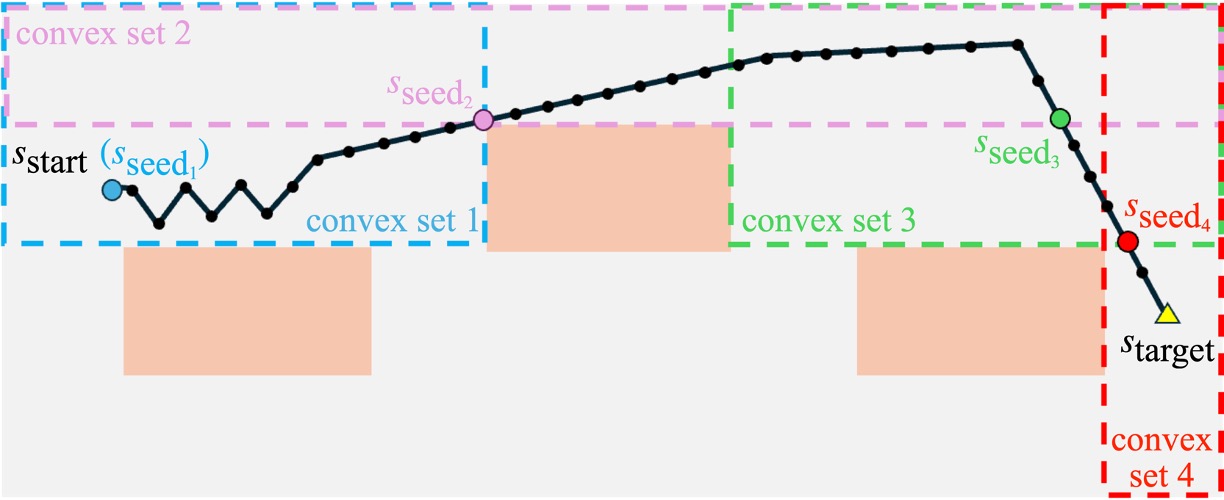}
    \caption{A 2D example for collision-free convex set construction. The orange blocks represent obstacles.  Given the robot's start configuration $s_{\text{start}}$ and goal configuration $s_{\text{target}}$, IRIS-RRT algorithm uses RRT path (black line) as a guide to approximate the connected convex sets to connect the start and goal configurations. In the graph, $s_{\text{seed}_i}$ represents the seed configuration anchoring the $i$-th convex set, which is depicted as a rectangle surrounded by dashed lines. Each pair of seed configuration and its corresponding convex set is highlighted in the same color. Note that the seed configurations typically lie at the intersections of the path generated by RRT and the previous convex sets. Moreover, the path generated by RRT does not need to be smooth or optimal, as it primarily serves to guide the construction of convex sets.}
    \label{fig: 2D example for convex region construction}
    \vspace{-10pt}
\end{figure}



\subsection{Construct Product Automaton}
\label{Construct Product Automaton}

In what follows, let $\delta(q, q')$ denote the propositional logic formula  that enables the transition from $q$ to $q'$ in DFA. We begin by addressing the challenge of constructing a graph, referred to as the {\it total product DFA}, for hierarchical sc-LTL specifications. First, for a set of specifications that have the same level, we construct their product DFA.

\begin{defn}[Product DFA (PDFA)]
Consider a DFA \(\subsup{\ccalA}{k}{i} = (\subsup{\ccalQ}{k}{i}, \subsup{\Sigma}{k}{i}, \subsup{\delta}{k}{i}, \subsup{q}{0,k}{i}, \subsup{\ccalQ}{k}{F,i})\) of the $i$-th specification $\level{i}{k}$ at level $k$. The PDFA for level $k$, denoted as $\ccalA_k = (\ccalQ_k, \Sigma_k, \delta_k, q_{0,k}, \ccalQ^F_k)$, is defined as follows:
\begin{itemize}
\item $\ccalQ_k = \subsup{\ccalQ}{k}{1} \times \ldots \times \subsup{\ccalQ}{k}{n_k}$ is the Cartesian product of automaton states across the specifications at level $L_k$;
\item $\Sigma_k = \subsup{\Sigma}{k}{1} \times \ldots \times \subsup{\Sigma}{k}{n_k}$ is the combined set of symbols from all DFAs at this level, where $\subsup{\Sigma}{k}{i} = 2^{\texttt{Prop}(\level{i}{k})}$ if $\level{i}{k}$ is a non-leaf specification, representing the child specifications of $\level{i}{k}$ at the immediate lower level $k+1$; otherwise, $\subsup{\Sigma}{k}{i} = 2^{\ccalA\ccalP}$;
\item $\delta_k \subseteq \ccalQ_k \times \Sigma_k \times \ccalQ_k$ is the transition relation, where a transition $\left((q_k^1, \ldots, \subsup{q}{k}{n_k}), (\subsup{\sigma}{k}{1}, \ldots, \subsup{\sigma}{k}{n_k}), (\subsup{q}{k}{1’}, \ldots, \subsup{q}{k}{n_k’})\right) \in \delta_k$ exists if $(\subsup{q}{k}{i}, \subsup{\sigma}{k}{i}, \subsup{q}{k}{i’}) \in \subsup{\delta}{k}{i}$ for all $i \in [n_k]$;
\item $q_{0,k} = (\subsup{q}{0,k}{1}, \ldots, \subsup{q}{0,k}{n_k})$ is the initial state of the product automaton;
\item $\ccalQ^F_k = \subsup{\ccalQ}{k}{F,1} \times \ldots \times \subsup{\ccalQ}{k}{F,n_k}$ is the set of  accepting states.
\end{itemize}
\end{defn}

Note that, by designing $\Sigma_k$ to be the product of individual symbols, each DFA is determined separately as to whether a transition occurs. 

\begin{defn}[Total PDFA (TPDFA)]~\label{def:total_pdfa}
The TPDFA $\ccalA = (\ccalQ, \Sigma, \delta, q_0, \ccalQ^F)$ for hierarchical sc-LTL specifications is detailed as follows:
\begin{itemize}
\item $\ccalQ = \ccalQ_K \times \ldots \times \ccalQ_1$ represents the set of product states across all levels;
\item $\Sigma = \Sigma_K \times \ldots \times \Sigma_1$ denotes the set of  symbols;
\item $\delta \subseteq \ccalQ \times \Sigma \times \ccalQ$ defines the transition relation, where a transition $((q_K, \ldots, q_1), (\sigma_K, \ldots, \sigma_1), (q’_K, \ldots, q’_1)) \in \delta$ is valid if:
\begin{itemize}
\item $(q_k, \sigma_k, q’_k) \in \delta_k$ for each $k \in [K]$;
\item $\sigma_k = (\subsup{\sigma}{k}{1}, \ldots, \subsup{\sigma}{k}{n_k})$ for all $k \in [K]$, with $\subsup{\sigma}{k}{i} = \{\subsup{\phi}{k+1}{j} \in \texttt{Prop}(\level{i}{k}) \,|\, \subsup{q}{k+1}{j} \in \subsup{\ccalQ}{k+1}{F,j}\}$ if $\level{i}{k}$ is a non-leaf specification, representing the child specifications of $\level{i}{k}$ at the immediate lower level $k+1$ that are fulfilled given $q_{k+1}$; otherwise, $\subsup{\sigma}{k}{i} \in \subsup{\Sigma}{k}{i} = 2^{\ccalA\ccalP}$.
\end{itemize}
\item $q_0 = (q_{0,K}, \ldots, q_{0,1})$ is the initial product state;
\item $\ccalQ^F = \{ q \in \ccalQ \, | \, \subsup{q}{1}{1} \in \subsup{\ccalQ}{1}{F,1}\}$ is the set of accepting product states where the root specification $\level{1}{1}$ is satisfied.
\end{itemize}
\end{defn}

The transition relation in Def.~\ref{def:total_pdfa} is constructed iteratively, starting from the bottom level upwards. For leaf specifications, transitions are determined based on atomic propositions, whereas for non-leaf specifications, transitions are defined by the truth of composite propositions from the immediately lower level.

 \begin{figure}[!t]
    \centering
     \subfigure[$\level{1}{1}$]{
      \label{fig:p0}
      \includegraphics[width=0.35\linewidth, trim={0.6cm 0cm 0.2cm 0cm}, clip]{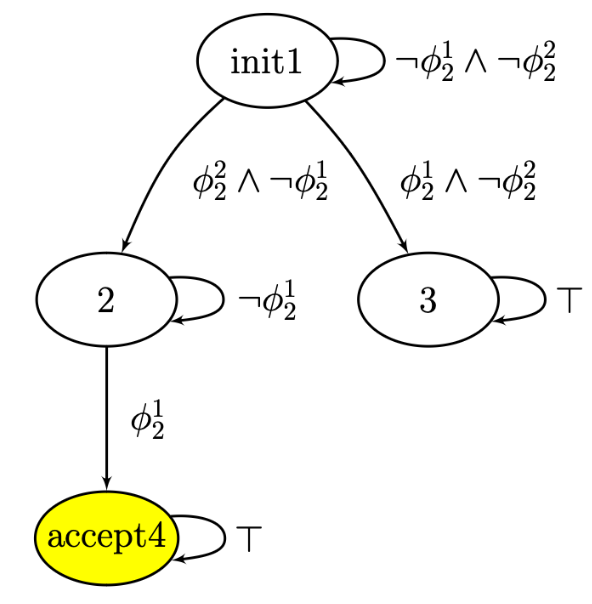}}
     \subfigure[$\level{1}{2}$]{
      \label{fig:p100}
      \includegraphics[width=0.35\linewidth, trim={1cm 0cm 0.2cm 0cm}, clip]{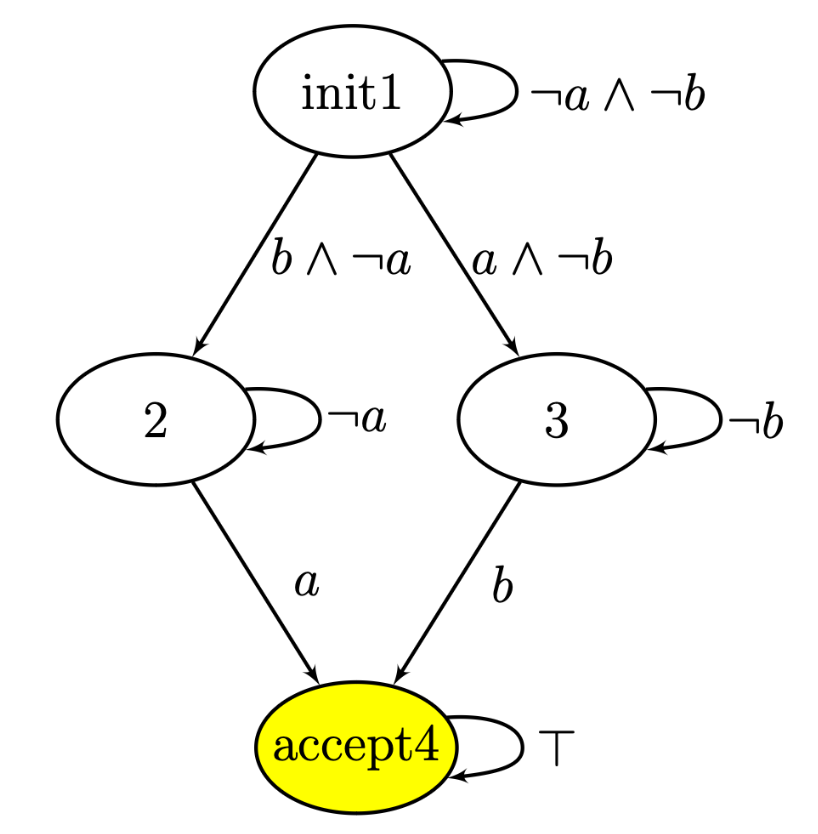}}
    \subfigure[$\level{2}{2}$]{
  \label{fig:p200}
  \includegraphics[width=0.18\linewidth, trim={0.3cm 0cm 0.2cm 0cm}, clip]{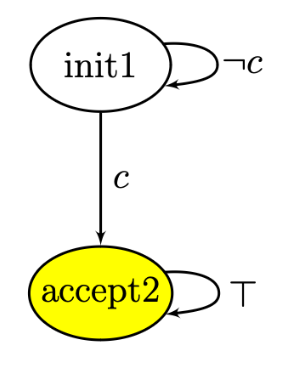}}
 \subfigure[TPDFA]{
  \label{fig:product}
  \includegraphics[width=\linewidth]{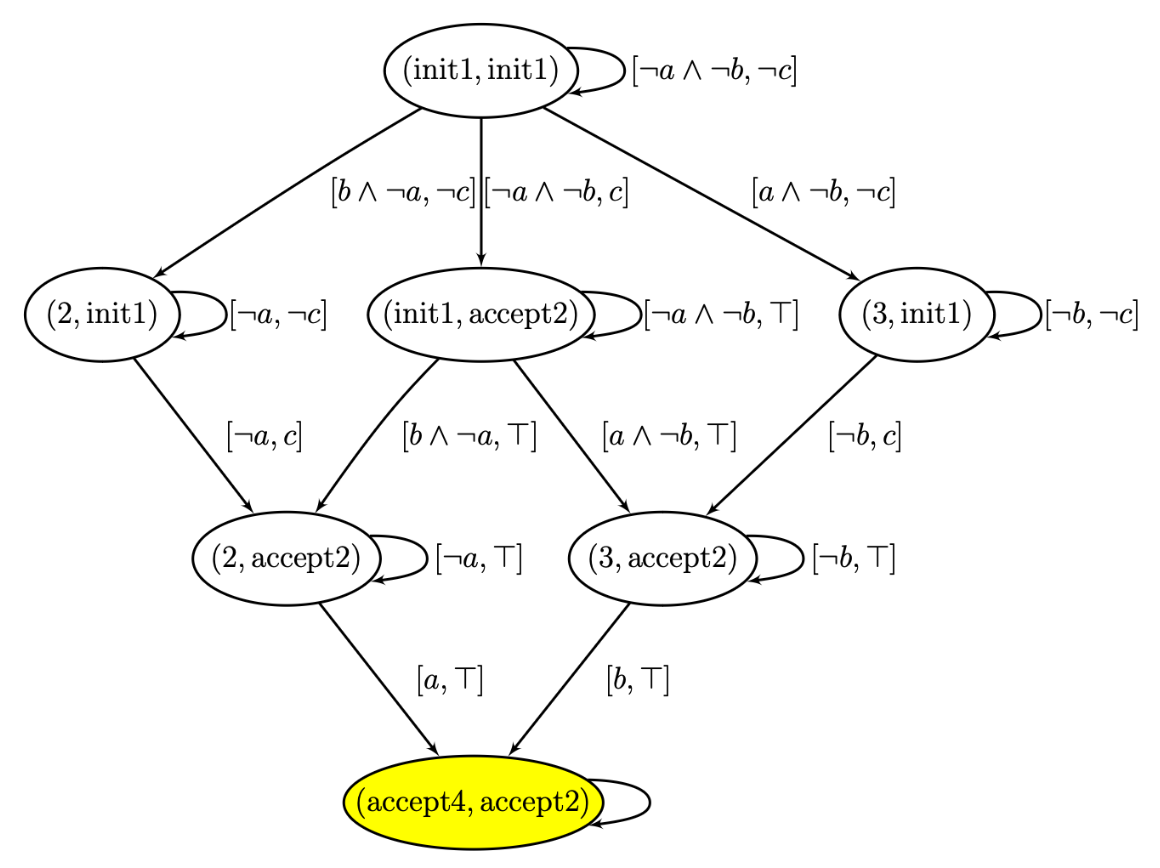}}
    \caption{The DFAs corresponding to specifications have their accepting states highlighted in yellow. In Fig.~\ref{fig:product}, only the automaton states for leaf specifications are displayed since the states of non-leaf specifications can be deduced from those of the leaf specifications in a bottom-up manner. Within the diagram, the labels inside the nodes and along the edges are derived from two parts: the first component is from $\level{1}{2}$ and the second from $\level{2}{2}$. There are four paths leading from the initial state $(\text{init}1, \text{init}1)$ to the accepting state $(\text{accept}4, \text{accept}2)$. Notably, in all these paths, the symbol $c$ is not the last one to be fulfilled.}
    \label{fig:dfas}
 \end{figure}

\begin{cexmp}{exmp:h-sc-ltl}{TPDFA}
The corresponding DFAs for each specification and the TPDFA for the hierarchical sc-LTL specified in~\eqref{eq:h-sc-ltl} are depicted in Fig.~\ref{fig:dfas}.
\end{cexmp}

\begin{defn}[Product Automaton (PA)]\label{def:product}
The product automaton combining TPDFA and TS is denoted as $\ccalP = (\ccalQ_\ccalP, \Sigma_\ccalP, \delta_\ccalP, q_{\ccalP, 0}, \ccalQ^F_\ccalP)$, where:
\begin{itemize}
\item $\ccalQ_\ccalP = \ccalS \times \ccalQ$ represents the set of product states, combining the states of TS and TPDFA;
\item $\Sigma_\ccalP = \Sigma$ is the set of symbols used in the transitions;
\item $\delta_\ccalP \subseteq \ccalQ_\ccalP \times \Sigma_\ccalP \times \ccalQ_\ccalP$ is the transition relation, as defined in Def.~\ref{def:transition_relation};
\item $\ccalL_\ccalP = \ccalL$ is the labeling function that maps states to the set of satisfied propositions;
\item $q_{\ccalP, 0} = (S_0, q_0)$ is the initial product state, combining the initial state of TS with that of TPDFA;
\item $\ccalQ_\ccalP^F = \ccalS \times \ccalQ^F$ is the set of accepting product states.
\end{itemize}
\end{defn}

Before we detail the transition relation, we present how to determine whether the observations $\ccalL_\ccalP (S)$, produced by $n$ robots, can enable transitions $q_{\ell} \rightarrow q_{\ell}' = 
(q_{\ell}^1, \ldots, q_{\ell}^m) \rightarrow (q^{1'}_{\ell}, \ldots, q^{m'}_{\ell})$ within $m$ leaf specifications where $q_\ell^i$ denotes the automaton state of the $i$-th leaf specification. This incorporates the task allocation into the edges of the product automaton. The key idea is to verify whether it is possible to construct $\sigma_{\ell} = (\subsup{\sigma}{{\ell}}{1},\ldots,\subsup{\sigma}{{\ell}}{m})$ from $\ccalL_\ccalP(S)$ in a manner that allows $(q_{\ell}, \sigma_{\ell}, q_{\ell}') \in \delta_{\ell}$ as outlined in Def.~\ref{def:total_pdfa}.

\begin{defn}[Model]\label{def:model}
Given the set of atomic propositions $\ccalL_\ccalP (S)$ generated by $n$ robots and the transition $q_{\ell} \rightarrow q_{\ell}’$ within $m$ leaf specifications, we deem $\ccalL_\ccalP (S)$ to be a {\it model} of the propositional logic formulas$\delta_{\ell}(q_{\ell}, q_{\ell}') =
\left(
\delta_{\ell}^{1}(q_{\ell}^{1}, {q_{\ell}^{1}}'),\,
\ldots,\,
\delta_{\ell}^{m}(q_{\ell}^{m}, {q_{\ell}^{m}}')
\right)$ if:
\begin{enumerate}
\item \label{cond:falsify} $\ccalL_\ccalP (S)$ does not falsify any propositional logic formula $\subsup{\delta}{\ell}{i}(\subsup{q}{\ell}{i}, {\subsup{q}{\ell}{i}}'), \quad \text{for } i \in [m]$.
\item  \label{cond:working_robots} The leaf specifications are divided into two groups $\Phi_{\ell_v}$ and $\Phi_{\ell_\epsilon}$ such that the total number of specifications $|\Phi_{\ell_v}| + |\Phi_{\ell_\epsilon}| = m$, with $|\Phi_{\ell_v}| = v$. Each group may contain any number of specifications, including none.
\item  \label{cond:executed_tasks} The set of robots is partitioned into several groups $\ccalR_1, \ldots, \ccalR_v$ and $\ccalR_\epsilon$ such that the sum of robots in these groups equals the total number of robots, $\sum_{i=1}^v |\ccalR_i| + |\ccalR_\epsilon| = n$, and each group $\ccalR_i$ and $\ccalR_\epsilon$ can include zero or multiple robots.
\item \label{cond:one-to-one} For each $i\in [v]$, there is a one-to-one correspondence between a leaf specification $\level{i}{K} \in \Phi^{K_v}$ and a group of robots $\ccalR_{i^*}$, where the atomic propositions generated by robots in $\ccalR_{i^*}$ meet the propositional logic requirement of $\level{i}{K}$, expressed as $\ccalL_{\ccalR}(S) \models \subsup{\delta}{K}{i}(\subsup{q}{K}{i}, \subsup{q}{K}{i’})$. Here, $\ccalL_{\ccalR}(S)$ represents the set of atomic propositions related to $\ccalR_{i^*}$. In this case, $\subsup{\sigma}{{\ell}}{i} = \ccalL_{\ccalR}(S)$. 

\item \label{cond:idle robot}Robots in $\ccalR_\epsilon$ are not assigned any leaf specifications, indicating they are idle.
\item \label{cond:idle task}There is no correspondence between any leaf specification in $\Phi_{\ell_\epsilon}$ and any robots, meaning that currently, no robots are engaged with the specifications in $\Phi_{\ell_\epsilon}$, but the propositional logic for each specification in $\Phi_{\ell_\epsilon}$ can be trivially fulfilled by $\varnothing$. In this case, $\subsup{\sigma}{{\ell}}{i} = \varnothing$.
\end{enumerate}
\end{defn}

Condition~\ref{cond:falsify}) requires that no leaf specification is violated by the joint robot configuration.
Condition~\ref{cond:working_robots}) categorizes robots into those actively executing tasks and those not assigned to any task. Similarly, Condition~\ref{cond:executed_tasks}) classifies tasks into those currently being performed by robots and those temporarily on hold.
Condition~\ref{cond:one-to-one}) connects robots actively executing tasks and tasks currently being performed, allowing for scenarios where multiple robots collaborate on a single task, such as jointly carrying a heavy load. Conditions~\ref{cond:idle robot}) and~\ref{cond:idle task}) pertain to situations involving idle robots and tasks that are not currently assigned to any robot, respectively.


\begin{defn}[Transition Relation]\label{def:transition_relation}
A transition from one product state $q_\ccalP = (S, q)$ to another $q’_\ccalP = (S’, q’)$ occurs if the following conditions are satisfied:
\begin{itemize}
    \item $(S, S’) \in \Delta$, as specified in Def.~\ref{def:gcs_pts};
    \item The set of propositions $\ccalL_\ccalP (S)$ is a model of the transition $\delta_{\ell}(q_{\ell}, q_{\ell}')$, as outlined in Def.~\ref{def:model}, indicating that the observed propositions at $S$ satisfy the transitions at the leaf specifications.
    \item For each level $k$ from 1 to $K$, the transition $(q_k, \sigma_k, q’_k) \in \delta_k$, as defined in Def.~\ref{def:total_pdfa}.
\end{itemize}
\end{defn}

\subsection{Prune PA}\label{Prune PA}

To manage the large size of the PA and facilitate the optimization process, we implement pruning techniques to reduce its complexity.

\begin{defn}[Essential State]
Given a pair of transitions $q_\ccalP \to q’_\ccalP$ where $q_\ccalP = (S, q)$ and $q’_\ccalP = (S’, q’)$, the product state $q_\ccalP$ is defined as an essential state if the automaton states differ, that is, if $q \not= q’$.
\end{defn}

An essential state marks that there has been progress at the task level. Based on this concept, the specifics of the pruning process are detailed in Alg.~\ref{alg: essential graph}. The procedure starts by expanding the set of essential states to include both the initial and accepting states [lines~\ref{epa:essential_states}-\ref{epa:augmentation}]. Using these essential states as the product space skeleton, we then establish connections between each pair of essential states where possible, utilizing intermediate states that navigate the robots through their configuration states according to a path determined by the RRT [lines~\ref{epa:rrt}-\ref{epa:connect}].

\begin{algorithm}[!t]
    \caption{\textbf{Construct essential PA}}
    \label{alg: essential graph}

    \KwIn{PA $\ccalP = (\ccalQ_\ccalP, \Sigma_\ccalP, \delta_\ccalP,  q_{\ccalP, 0},\ccalQ^F_\ccalP)$}
    \KwOut{Essential PA $\ccalP_e = (\ccalQ_\ccalP^e, \Sigma_\ccalP, \delta_\ccalP^e,  q_{\ccalP, 0},\ccalQ^F_\ccalP)$}

    $\ccalQ_\ccalP^* \gets \texttt{GetEssentialStates}(\ccalQ_\ccalP, \Sigma_\ccalP)$ \label{epa:essential_states}\;
    $\ccalQ_\ccalP^* \gets \ccalQ_\ccalP \cup \{q_{\ccalP, 0}\} \cup \ccalQ_\ccalP^F$ \label{epa:augmentation}\;
    $\ccalQ_\ccalP^e \gets \ccalQ_\ccalP^* $ \;
    \For{$q_\ccalP = (S, q) \in \ccalQ_\ccalP^*$}{
         \For{$q'_\ccalP = (S', q') \in \ccalQ_\ccalP^*$}{
            \If{$q \to q'$}{
                $\ccalS_{\text{connect}} \gets \texttt{IRIS\_RRT}(S, S')$ \label{epa:rrt}\;
                $\ccalQ_\ccalP^e \gets \ccalQ_\ccalP^e \cup \{q''_\ccalP = (S'', q'') \in \ccalQ_\ccalP \, | \,  S'' \in \ccalS_{\text{connect}}, q'' = q'  \}$\label{epa:connect} \;
            }
        }
    }
    $\delta_\ccalP^e = \texttt{GetTransitions}(\delta_\ccalP, \ccalQ_\ccalP^e) $ \;
    \Return $\ccalP_e = (\ccalQ_\ccalP^e, \Sigma_\ccalP, \delta_\ccalP^e,  q_{\ccalP, 0},\ccalQ^F_\ccalP)$\;
\end{algorithm}

\subsection{Optimization Formulation and Extension for Multi-robot Handover}\label{Optimization Formulation and Extension for Multi-robot Handover}

Upon building the product automaton $\ccalP$, we define a target product state $q_\ccalP^{\text{target}}$, which serves as the endpoint for all accepting product states within $\ccalQ_{\ccalP}^F$. The graph of the product automaton, denoted as $\ccalG = (\ccalV, \ccalE)$, is used to structure the optimization problem by focusing exclusively on the configuration aspect of the product states, as the automaton aspect of the product states shapes the graph structure. This approach effectively transforms the problem into motion planning in the configuration space~\cite{marcucci2023motion}. The initial configuration state is derived from the initial product state, and the target configuration state corresponds to $q_\ccalP^{\text{target}}$. This setup is tackled through an optimization problem formulated with convex programming as shown in equation~\eqref{eq:gcs}, aiming to establish a viable path from the initial to the target configuration states. In what follows, we extend this framework to accommodate multi-robot pick-and-place tasks, which include handover interactions, by introducing relevant constraints. 

Consider a scenario with $l$ objects. We define binary decision variables $b^{i, j}$ to indicate whether robot $i$ holds object $j$, with each vertex $v$ in $\ccalV$ having an associated variable $b_v^{i,j}$. When $b_v^{i, j} = 1$, it means that robot $i$ is actively transporting object $j$. In contrast, $b_v^{i, j} = 0$ indicates that the robot $i$ is not engaged in transporting the object $j$. The handover constraints governing the transfer of objects between robots are categorized into three types: incoming constraints, conflict constraints, and labeled convex set constraints. 

\subsubsection{Incoming Constraints}
Incoming constraints depict scenarios in which robots transport objects to various (intermediate) locations. For any given vertex $v \in \ccalV$, these constraints depend on whether the robot $i$, for each $i \in [n]$, is engaged in a handover with another robot. There are two possible cases for incoming constraints:

(a) {\bf Robot $i$ is not conducting the handover}. This scenario is governed by an equality constraint that ensures the continuity of possession, meaning object $j$ remains with robot $i$ during transit when no handover occurs:
\begin{align}
b_{v’}^{i, j} = b_v^{i, j}, \quad \forall v’ \in \ccalV’.
\label{eq:nonhandover}
\end{align}
Here, $v’$ refers to vertices in the set $\ccalV’$, which are predecessors leading into the vertex $v$.

It should be noted that if the optimal path in the product graph does not traverse certain vertices, the binary decision variables associated with those vertices, $b_v^{i,j}$, must be set to 0, and the incoming constraint~\eqref{eq:nonhandover} becomes irrelevant. To ensure that the incoming constraint is only applied along the optimal path, let the binary variable $y_e$ indicate whether the optimal path includes the edge from $v$ to $v'$~\cite{marcucci2024shortest}. We adopt the Big-M method to establish a connection between the optimal path and the binary decision variable $b_v^{i,j}$. The revised form of the incoming constraint~\eqref{eq:nonhandover}, when robot $i$ is not engaged in handover within the vertex $v$, is expressed as follows:
\begin{align}
M(y_e - 1) \leq (b_{v’}^{i, j} - b_v^{i, j}) \leq M(1 - y_e) , \quad \forall v’ \in \ccalV’,
\label{eq:reformed_nonhandover}
\end{align}
where $M$ is typically a large positive integer.  We set  $M = 2$  to ensure a tighter formulation.

(b) {\bf Robot $i$ is conducting the handover}. Assume $i’$, where $i’ \in [n]$, is the robot with which the handover is being conducted. The following constraint ensures that the two robots successfully transfer object $j, \forall j \in [l]$, upon reaching the location by toggling the decision variables associated with each robot at the respective vertices:
\begin{align}
b_{v’}^{i’, j} = 1 - b_v^{i, j}, \quad \forall v’ \in \ccalV’.
\label{eq:handover}
\end{align}

To ensure that the incoming constraint~\eqref{eq:handover} for robot handovers is applied strictly along the optimal path, we utilize the Big-M method to reformulate the handover constraint for robot $i$ in the vertex $v$ as follows:
\begin{align}
M(y_e - 1) \leq (b_{v’}^{i’, j} - b_v^{i, j}) \leq M(1 - y_e) , \quad \forall v’ \in \ccalV’.
\label{eq:reformed_handover}
\end{align}
The constraint~\eqref{eq:reformed_handover} effectively prevents a handover from occurring unless the path is optimal.

\subsubsection{Conflict Constraints}
To ensure that each robot handles no more than one object at a time, and each object is managed by only one robot simultaneously, the following constraints apply:
\begin{subequations}
\begin{align}
\sum_{j \in [l]} b_{v}^{i,j} &\leq 1, \quad \forall v \in \ccalV, \forall i \in [n], \\
\sum_{i \in [n]} b_{v}^{i,j} &\leq 1, \quad \forall v \in \ccalV, \forall j \in [l].
\end{align}
\end{subequations}

\subsubsection{Labeled Convex Set Constraints}
For a product automaton $\ccalP$ associated with labeled convex set that is labeled with a robot, denote by $i$, whose state aligns with the location of object $j$ and the vertex $v \in \ccalV'$:
\begin{align}
    b_v^{i, j} = 1.
\end{align}

Those handover constraints ensure an orderly and conflict-free transfer of objects among the robots. Due to the involvement of binary decision variables in the handover constraints, we solve the optimization Problem~\ref{problem1} using mixed-integer convex programming (MICP).
\section{Theoretical Analysis}\label{Theoretical Analysis}

\begin{theorem}[Soundness]
  The returned path $p$ satisfies the hierarchical sc-LTL specifications $\Phi$.
\end{theorem}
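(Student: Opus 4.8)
The plan is to show that the path $p$ returned by the MICP, read off the product automaton, is an accepting run of $\ccalP$, and then to turn that run into a state-specification sequence whose trace satisfies the root specification $\level{1}{1}$, which by the definition of satisfaction for hierarchical sc-LTL is exactly $\texttt{Trace}(\rho)\models\Phi$. First I would argue that $p$ induces a valid finite run of $\ccalP$. A feasible solution of the shortest-path problem~\eqref{eq:gcs} on the pruned product graph $\ccalP_e$ is by construction a vertex sequence $q_\ccalP^0 q_\ccalP^1 \cdots q_\ccalP^N$ with $q_\ccalP^0 = q_{\ccalP,0}$, $q_\ccalP^N = q_\ccalP^{\text{target}}$, and consecutive vertices joined by an edge of $\ccalP_e$; since $\delta_\ccalP^e$ is obtained from $\delta_\ccalP$ by \texttt{GetTransitions} (Alg.~\ref{alg: essential graph}) and $\ccalP_e$ retains $q_{\ccalP,0}$ and $\ccalQ^F_\ccalP$, this is also a run of $\ccalP$ whose last state lies in $\ccalQ^F_\ccalP$ (Def.~\ref{def:product}). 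Write $q_\ccalP^t = (S^t, q^t)$ with $q^t = (q_K^t,\ldots,q_1^t)$, and note that, by Def.~\ref{def:gcs_pts}, consecutive $S^t$ overlap or are adjacent, so the GCS solution threads a well-defined continuous trajectory $\rho=[\rho_1,\ldots,\rho_n]$ through $S^0,\ldots,S^N$, with $\texttt{Trace}$ constant on each $S^t$.

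Second, I would extract from this run a state-specification sequence $\tau=\tau_0\cdots\tau_N$ in the sense of Def.~\ref{defn:state_specification}. Each step $q^t\to q^{t+1}$ is admissible only if $\ccalL_\ccalP(S^t)$ is a model of the leaf transition $\delta_{\ell}(q_{\ell}^t,q_{\ell}^{t+1})$ (Def.~\ref{def:transition_relation}), so Def.~\ref{def:model} supplies a partition of the robots into groups $\ccalR_1,\ldots,\ccalR_v,\ccalR_\epsilon$ and a one-to-one matching of $\ccalR_1,\ldots,\ccalR_v$ to the leaf specifications they execute. Taking $\state{\psi}{r}{t}$ to be the leaf specification matched to robot $r$ (or $\epsilon$ if $r\in\ccalR_\epsilon$) and $\state{s}{r}{t}$ to be $r$'s configuration in $S^t$ yields $\tau_t$. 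By conditions~\ref{cond:one-to-one}) and~\ref{cond:idle task}) of Def.~\ref{def:model}, the symbol read at step $t$ by the DFA of each leaf specification $\level{i}{K}$ is either the atomic propositions $\ccalL_{\ccalR}(S^t)$ produced by its assigned robot group, which by definition satisfies $\subsup{\delta}{K}{i}(\subsup{q}{K}{i},{\subsup{q}{K}{i}}')$, or $\varnothing$, which trivially does; hence the sequence of $i$-th components of $q^0,q^1,\ldots$ is a genuine run of $\subsup{\ccalA}{K}{i}$ on the word it reads.

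Third — the crux — I would prove by bottom-up induction on the hierarchy levels that whenever a specification's DFA is in an accepting state at some step of the run, the subsequence of symbols it has read so far is accepted, hence satisfies that specification. For leaf specifications this is immediate from the DFA run constructed above together with $\ccalL_{\ccalA}=\texttt{Words}(\phi)$. For a non-leaf $\level{i}{k}$, Def.~\ref{def:total_pdfa} forces its input symbol at step $t$ to be exactly $\{\level{j}{k+1}\in\texttt{Prop}(\level{i}{k})\mid \subsup{q}{k+1}{j,t}\in\subsup{\ccalQ}{k+1}{F,j}\}$, i.e.\ the set of child specifications whose DFAs are currently accepting; by the inductive hypothesis these are precisely the children already satisfied, so the word fed to $\level{i}{k}$'s DFA faithfully records the truth of its composite propositions, and reaching $\subsup{\ccalQ}{k}{F,i}$ witnesses satisfaction of $\level{i}{k}$. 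Applying this at the top: $q_\ccalP^N\in\ccalQ^F_\ccalP$ forces $\subsup{q}{1}{1}\in\subsup{\ccalQ}{1}{F,1}$ (Def.~\ref{def:total_pdfa}), so the root DFA is accepting, $\level{1}{1}$ is satisfied, and therefore $\texttt{Trace}(\rho)\models\Phi$, i.e.\ $p$ satisfies $\Phi$.

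The main obstacle is the inductive step of the third part: the ``currently accepting'' bookkeeping of composite propositions must line up with the state-specification semantics of~\cite{luo2024simultaneous}, in particular that once a lower-level specification is satisfied its DFA stays accepting (so a composite proposition cannot spuriously flicker off partway through the run), and that the per-step task allocations extracted from Def.~\ref{def:model} can be stitched into one consistent assignment over the whole run. The handover extension adds nothing here, since the binary variables $b_v^{i,j}$ and their Big-M constraints only restrict which paths are feasible and never change the trace, so soundness with respect to $\Phi$ carries over unchanged.
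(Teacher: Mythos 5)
Your proposal is correct and follows essentially the same route as the paper: both extract a state-specification sequence from the returned path using the robot-to-leaf-specification pairing supplied by conditions~4) and~5) of Def.~\ref{def:model}, and both conclude satisfaction of $\Phi$ from the fact that the final product state is accepting, so the root specification $\level{1}{1}$ is fulfilled. The bottom-up induction you spell out in your third step (and the non-flickering caveat you flag) is precisely the part the paper compresses into an appeal to the transition relation of Def.~\ref{def:transition_relation} and the hierarchical sc-LTL semantics of~\cite{luo2024simultaneous}, so your version is a more explicit rendering of the same argument rather than a different one.
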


\begin{proof}
    The proof consists of two steps. In the first step, we construct a state-specification sequence $\tau$ as in Def.~\ref{defn:state_specification} from the path $p$. In the second step, we prove that the state-specification sequence $\tau$ satisfies the hierarchical sc-LTL $\Phi$. 

  To obtain the state-specification sequence $\tau$, the goal is to pair each robot with a leaf specification that it is undertaking, if any.  Note that each point in the path $p$ is a product state $q$ composed of a configuration state $s = (s_1, \ldots, s_n)$ and a product DFA state $q_\ccalP = (q_K, \ldots, q_1)$ with $q_{\ell} = (\subsup{q}{{\ell}}{1}, \ldots, \subsup{q}{{\ell}}{m})$ for leaf specifications. Let $q'_\ccalP = (q'_K, \ldots, q'_1)$ with $q'_{\ell} = (\subsup{q}{{\ell}}{1'}, \ldots, \subsup{q}{{\ell}}{m'})$ denote the next state of $q_\ccalP$ in the path $p$. As stated in Def.~\ref{def:transition_relation}, $\ccalL_\ccalP(S)$ is a model of $\delta_\ell(q_\ell, q'_\ell)$.  Next, we analyze depending on conditions in Def.~\ref{def:model}. Specifically, condition~\ref{cond:falsify}) ensures that no leaf specification is violated. Furthermore, we pair the leaf specification $\level{i}{\ell} \in \Phi_{\ell_v} $  with every robot in the corresponding set $\ccalR_{i^*}$, according to condition~\ref{cond:one-to-one}), otherwise, we pair robot $i \in \ccalR_{\epsilon}$ with null specification $\epsilon$ according to condition~\ref{cond:idle robot}).
   
  Given the state-specification sequence $\tau$,  the labels generated by the sequence of configuration states satisfy not only the leaf specifications, but also the non-leaf specifications, according to the transition relation in Def.~\ref{def:transition_relation},. Moreover, it reaches an accepting product state where the top-most specification $\level{1}{1}$ is satisfied, implying that the hierarchical sc-LTL is satisfied according to semantics in~\cite{luo2024simultaneous}.
\end{proof}

\begin{definition}[Incompatible specifications]
    Two sc-LTL specifications are considered incompatible if there exists a path that satisfies one but inevitably violates the other, regardless of how the path is extended.
\end{definition}

For instance, $\phi_1 = \Diamond a$ and $\phi_2 = \neg a\ \ccalU\ b$ are incompatible, as a path that produces label $a$ but not label $b$ satisfies $\phi_1$ while violating $\phi_2$. This path cannot be extended to satisfy $\phi_2$. We say that a hierarchical sc-LTL specification  $\Phi$  is considered compatible if it does not include any pair of leaf specifications that are mutually incompatible.

\begin{theorem}[Completeness]
Assuming the hierarchical sc-LTL $\Phi$ is compatible, up to the space decomposition and trajectory parameterization, our approach returns a path that satisfies $\Phi$.
\end{theorem}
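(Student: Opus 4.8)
The plan is to show that whenever a path satisfying $\Phi$ exists in the continuous problem, our construction preserves it through each stage of the pipeline: the CS-based transition system, the TPDFA, the product automaton, and finally the pruning step. I would argue by "completeness through reduction", tracing a hypothetical feasible trajectory $\rho$ of Problem~\ref{problem1} down to a path $p$ in the pruned PA graph $\ccalG$, and then invoking the tightness of the MICP relaxation from~\cite{marcucci2024shortest} so that the optimizer actually recovers such a $p$.

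\emph{Step 1 (from continuous trajectory to a TS path).} Given a feasible collision-free $\rho = [\rho_1, \ldots, \rho_n]$ satisfying $\Phi$, I would first discretize it into a finite sequence of configuration states and show that, up to the space decomposition, each of these states lies in some convex set of $\ccalS$, and consecutive states lie in overlapping or adjacent sets, yielding a valid run of the CS-based transition system (Def.~\ref{def:gcs_pts}). This is where the qualifier ``up to the space decomposition and trajectory parameterization'' does the work: I am not claiming the exact $\rho$ is recovered, only that the decomposition is fine enough to admit \emph{a} path realizing the same trace. The key sub-claim is that the labeled convex sets $S_{\text{label}}$ produced by Alg.~\ref{alg: labeled Convex Sets Region Construction} together with the connecting sets from IRIS-RRT (Alg.~\ref{alg: RRT based Convex Sets Region Construction}) cover the relevant portions of configuration space and are mutually reachable; here I would lean on the completeness of RRT (in the limit) and the fact that IRIS-NP always returns a nonempty collision-free region around its seed.

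\emph{Step 2 (lifting to the PA and surviving pruning).} Next I would take the trace of the TS run and, using the compatibility hypothesis on $\Phi$, build a model in the sense of Def.~\ref{def:model} at each step: compatibility guarantees that no leaf specification is irrecoverably falsified along the way, so condition~\ref{cond:falsify}) can be met, and the remaining conditions~\ref{cond:working_robots})--\ref{cond:idle task}) are satisfiability conditions that merely require \emph{some} assignment of robots to leaf specifications, which exists because $\rho$ already witnesses a satisfying assignment. Composing with the bottom-up transition relation of the TPDFA (Def.~\ref{def:total_pdfa}) and Def.~\ref{def:transition_relation}, this produces a path in the (un-pruned) PA from $q_{\ccalP,0}$ to an accepting state. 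To see this path survives in $\ccalP_e$, I would observe that Alg.~\ref{alg: essential graph} retains all essential states (those where the automaton component changes) plus the initial and accepting states, and reconnects consecutive essential states via IRIS-RRT; since every task-level transition in our lifted path is exactly an essential-state transition, and the intervening purely-motion segments are re-synthesized by IRIS-RRT between the same endpoints, the pruned graph still contains a path with the same sequence of automaton states, hence the same trace-level behavior and the same satisfaction of $\Phi$ (invoking Soundness for the reverse direction once such a path is found).

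\emph{Step 3 (the optimizer finds it).} Finally, feasibility of the MICP follows because the pruned graph contains at least one valid start-to-target path, and the handover constraints of Section~\ref{Optimization Formulation and Extension for Multi-robot Handover} are consistent along it (the Big-M reformulations are exact for $M=2$ on $\{0,1\}$-valued variables). Since~\cite{marcucci2024shortest} guarantees the convex relaxation is tight and the solver returns an optimal integer-feasible path when one exists, the returned $p$ is a genuine path in $\ccalG$, and by Soundness it satisfies $\Phi$.

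\emph{Main obstacle.} I expect the hardest part to be Step~1, specifically making precise the claim that the union of labeled and IRIS-RRT-connected convex sets is ``rich enough'' to contain a reparameterization of every feasible $\rho$ --- this is essentially a coverage/connectivity argument about IRIS-NP regions along an RRT path, and it is genuinely only true asymptotically (or under an explicit assumption that the decomposition covers the free configuration space). I would isolate this as the precise meaning of the phrase ``up to the space decomposition and trajectory parameterization'' and, if a clean statement is elusive, promote it to an explicit mild assumption (e.g., that $\ccalS_{\text{connect}}^{i,j}$ is nonempty whenever $S_{\text{label}}^i$ and $S_{\text{label}}^j$ are connected in the free space), mirroring how the abstract already hedges with ``under mild assumptions.'' The task-allocation combinatorics in Step~2 are, by contrast, not an obstacle: Def.~\ref{def:model} was engineered precisely so that any witness assignment extends to a model, so compatibility of $\Phi$ is exactly the hypothesis needed there.
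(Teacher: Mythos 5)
Your proposal follows essentially the same route as the paper's own proof: argue that the CS-based transition system, the TPDFA, and the product automaton each preserve all feasible behaviors (with compatibility of $\Phi$ used exactly to discharge condition~1 of the model definition), and then invoke the GCS/MICP results of Marcucci et al.\ to conclude that the optimizer recovers a path. Your version is in fact more careful than the published one --- the paper's proof never addresses survival of the path under the pruning step and compresses your Step~1 into a one-line appeal to increasing the number of convex sets and the degree of B\'ezier curves --- so your explicit isolation of the coverage/connectivity assumption as the content of ``up to the space decomposition and trajectory parameterization'' is an improvement rather than a deviation.
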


\begin{proof}
The CS-based product transition system (Def.~\ref{def:gcs_pts}) encompasses all possible behaviors of the robot system, while the total product of DFAs encompasses all solutions to fulfill the hierarchical sc-LTL (Def.~\ref{def:total_pdfa}).  The construction of PA (Def.~\ref{def:product}) is based on the concept of a model (Def.~\ref{def:model}). In particular, condition~\ref{cond:falsify}) excludes propositional logic formulas that contradict each other. Since we consider only compatible hierarchical sc-LTL specifications, no contradictory propositional logic formulas arise. Conditions~\ref{cond:working_robots})-\ref{cond:idle task}) ensure the existence of a feasible task allocation. Consequently, the product system (Def.~\ref{def:product}) includes all behaviors of the robot system that conform to the hierarchical sc-LTL.
According to~\cite{marcucci2023motion}, by increasing the number of convex sets and the degree of B\'ezier curves to enhance the approximation, the MICP is guaranteed to find a path.
\end{proof}
    
\section{Experiments}
\label{Experiments}
We evaluate our approach across a variety of multi-robot task scenarios, including planar robot motion planning, coordination among multiple robotic manipulators with handovers, quadrupedal mobile robots performing manipulator handovers, and a structured industrial environment featuring several robots and a conveyor system. All experiments were conducted using the Drake \cite{drake}, and executed on a desktop computer equipped with an Intel i9 processor and 32GB of RAM. To solve Mixed-Integer Convex Programming (MICP), we use the MOSEK solver \cite{MOSEK} via the Drake interface. Our open-source code can be accessed at \href{https://github.com/intelligent-control-lab/Task_Motion_Planning_with_HLTL_and_GCS.git}{https://github.com/intelligent-control-lab/Task\_Motion\_Planning\_with\_HLTL\_and\_GCS.git}. The running times for the algorithm of all scenarios are detailed in \Cref{tb:Algorithm Running Time.}. We precompute the CS-based transition system offline and report computation times for online modules, including PA construction, PA pruning, and MICP solving.  The demonstration video is available at \href{https://youtu.be/FPTLGm5iigc}{this link}.

\renewcommand{\arraystretch}{1.2} 
 \begin{table*}[!th]
 \centering
 \captionsetup{width=15cm}
\caption{Algorithm Running Time.}
\label{tb:Algorithm Running Time.}
\begin{tabular}{lcccccc}
\bhline
 Tasks & Figure & Construct PA ($s$)& Prune PA ($s$) & Solver Time ($s$)\\\hline
two-robot motion planning &  ~\ref{fig: two-robot-case}(a) & 0.106 & 0.002 & 5.531\\
two-robot handover &  ~\ref{fig: two-robot-case}(b)  & 0.061 & 0.001 & 0.277\\
four-robot handover (scenario 1) & \ref{fig: four robots handover} & 12.711 & 0.007 & 11.601\\
four-robot handover (scenario 2) & \ref{fig: four-robot-handover2} & 7.262 & 0.003 & 7.691\\
four-robot handover with obstacle (scenario 3) & \ref{fig: four-robot-handover3} & 8.355 & 0.003 & 6.291\\
four-robot handover (scenario 4) & \ref{fig: four-robot-handover4} & 187.18 & 0.083 & 19.10\\

Spot-robot handover &  \ref{fig: spot_handover} & 0.074 & 0.001 & 0.378 \\
two-robots with conveyor &  \ref{fig: two_iiwa_conveyor} & 5.738 & 0.004 & 1.019\\
\bhline
\end{tabular}
\end{table*}

\subsection{Planar Motion Planning Case}

The planar motion planning scenario depicted in Fig.~\ref{fig:key_door} involves a robot tasked with collecting five keys to navigate through the corresponding doors. This benchmark example, noted for its complex specifications as proposed in \cite{kurtz2023temporal}, originally required transforming the sc-LTL formula into DFA. The conversion process from sc-LTL to DFA is known to exhibit double-exponential complexity \cite{belta2017formal}, leading to a prolonged conversion time. By employing hierarchical sc-LTL, our approach significantly reduces this complexity. The original standard sc-LTL formula is
\begin{equation}
\begin{aligned}
    \phi = \bigwedge_{i=1}^5 \neg \mathsf{door}_i \, \mathcal{U} \, \mathsf{key}_i \wedge \Diamond \mathsf{goal}.
\end{aligned}
\end{equation}
The hierarchical sc-LTL specifications are represented as 
\begin{equation}
\begin{aligned}
     L_1: \quad & \level{1}{1} = \bigwedge_{i=1}^5 ( \Diamond \level{i}{2}) \wedge \Diamond \level{6}{2} \\
    L_2: \quad &  \level{i}{2} = \neg \mathsf{door}_i \, \mathcal{U} \, \mathsf{key}_i , \quad i = 1, \ldots, 5.\\
    \quad & \level{6}{2} = \Diamond \mathsf{goal}.
\end{aligned}
\end{equation}

\begin{figure}[!th]
    \centering
    \includegraphics[width=0.5\linewidth,  trim={5.7cm 2.5cm 5.1cm 2.7cm}, clip]{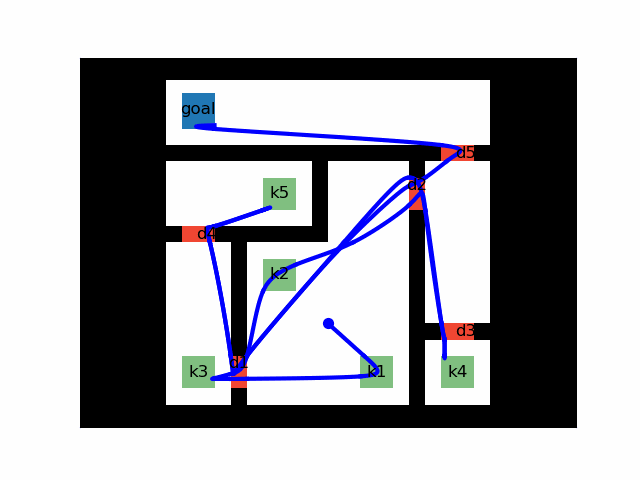}
    \caption{The door puzzle problem, where the blue dot represents the initial robot location.~\cite{kurtz2023temporal,vega2018admissible}.}
    \label{fig:key_door}
\end{figure}

Hierarchical sc-LTL enhances the efficiency of representing temporal logic specifications, resulting in quicker conversion times and faster motion planning. This improved efficiency is evident in the performance comparison shown in Table~\ref{tab:comparison_with_ltl_gcs}. Our approach yields a solution with a cost almost identical to that of the method described in \cite{kurtz2023temporal}, yet it achieves this result approximately 40 times faster. 

\renewcommand{\arraystretch}{1.3} 
 \begin{table}[!th]
     \centering\footnotesize
      \begin{tabular}{cccccc}
      \bhline
      \multirow{2}{*}{Method} & \multicolumn{4}{c}{Time ($s$)} & \multirow{2}{*}{Cost} \\
       \cline{2-5}
       & Sc-LTL to DFA & Construct PA &   Solver  &  Total &\\
       \hline
      \cite{kurtz2023temporal} & 401.6 & 94.4 &  2.6 & 498.7  & 774.7\\
      Ours &  {\bf 8.0} &  {\bf 4.0} &  {\bf 0.9}  &  {\bf 13.0} &  774.2 \\
     \bhline
     \end{tabular}
     \caption{Performance comparison.}
     \label{tab:comparison_with_ltl_gcs}
     \vspace{-10pt}
 \end{table}

\subsection{Multi-robot Motion Planning and Handover Case}
To illustrate the scalability of our method in high-dimensional spaces, we evaluated it in four systems with different tasks: a system involving two robotic manipulators (Fig.~\ref{fig: two-robot-case}), a system with four robotic manipulators with multiple objects(Figs.~\ref{fig: four-robot-handover2}-\ref{fig: four-robot-handover4}), a system with robotic manipulators and mobile robots (Fig.~\ref{fig: spot_handover}), and a structured industrial environments with robotic manipulators and conveyor (Fig.~\ref{fig: two_iiwa_conveyor}).

\subsubsection{Two robotic manipulators}
In the initial example, two robotic manipulators are tasked to pick up an object from target 1 and place it on target 2. The hierarchical sc-LTL specifications for this scenario are articulated as follows:
\begin{equation}
\begin{aligned}
L_1: \quad & \level{1}{1} = \Diamond \level{1}{2}\\
L_2: \quad & \level{1}{2} = \Diamond (\mathsf{target1} \wedge \Diamond \mathsf{target2}).
\end{aligned}
\end{equation}

Our planning algorithm uses B\'ezier splines to navigate a valid path, ensuring $C^2$ continuity for smooth trajectories. An L2 norm for the length of the joint path is also integrated to optimize for the shortest possible route. We evaluate our planner in two distinct pick-and-place scenarios. In the first scenario, depicted in Fig.~\ref{fig:scenario_a}, where targets 1 and 2 are equidistant from both robots, our planner decides only the robot with the shortest path to execute the task, leaving the other robot stationary to minimize the total path length. In the second scenario, illustrated in Fig.~\ref{fig:scenario_b}, each robot exclusively accesses one of the targets, necessitating a handover to complete the task. Consequently, one robot picks up the object and passes it to the other, which then places it at target 2. These tests confirm that our planner adeptly identifies the most efficient strategy autonomously, eliminating the need for pre-defined orders on robot movement or handover timing.

\begin{figure}[!th]
    \centering
       \subfigure[Both robots can reach the target positions 1 and 2.]{
      \label{fig:scenario_a}
      \includegraphics[width=\linewidth]{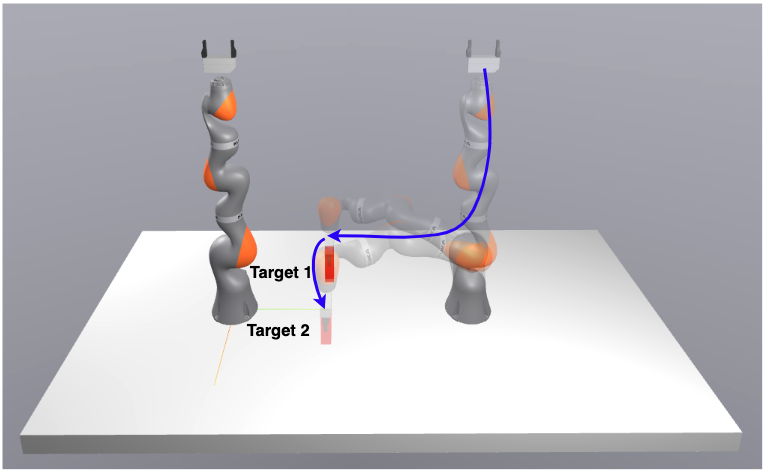}}
     \subfigure[The left robot can only reach target 1, and the right robot can only reach target 2. The handover is necessary to complete the task.]{
      \label{fig:scenario_b}
      \includegraphics[width=\linewidth]{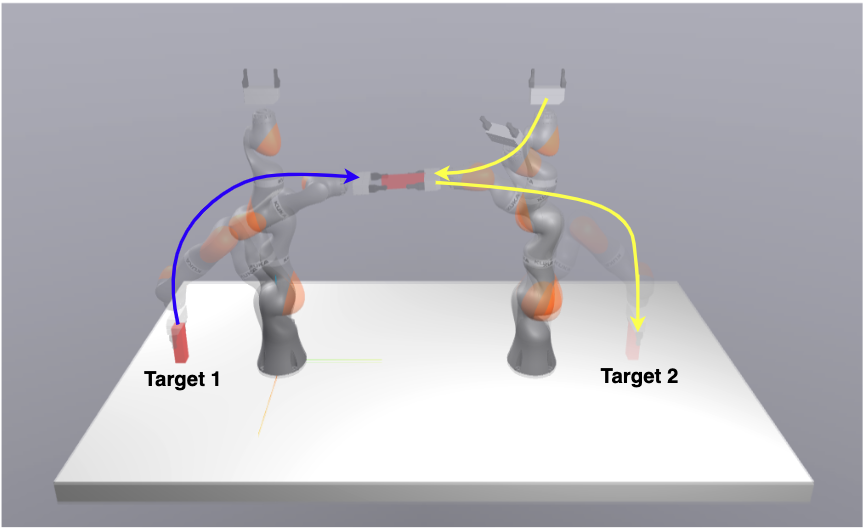}}
    \caption{Example setup with two robotic manipulators with one movable object.}  
    \label{fig: two-robot-case}
    \vspace{-10pt}
\end{figure}

\subsubsection{Four robotic manipulators with multiple objects}\label{subsubsec:four robot}
The second example shows our planner in a more complex scenario and task specification that involves four KUKA iiwa robotic manipulations, with a 28 degree of freedom (DOF). We evaluated our algorithm in several distinct scenarios to assess its performance under different task specifications.
\paragraph{Scenario 1}
\label{Scenario 1}
In the first scenario, the robot was aligned in a linear arrangement, illustrated in Fig.~\ref{fig: four robots handover}. The robots are assigned to pick up three objects from specifically labeled pickup positions (targets 1, 2, and 3) and place them in corresponding placement positions (targets 4, 5, and 6). The hierarchical sc-LTL specification for this task is given by:
\begin{equation}
\begin{aligned}
\label{eq:four-robots HLTL}
L_1: \quad &  \level{1}{1} = \Diamond (\level{1}{2} \wedge \Diamond (\level{2}{2} \wedge \Diamond \level{3}{2}))\\
L_2: \quad & \level{1}{2} = \Diamond (\mathsf{target1} \wedge \Diamond \mathsf{target4}) \\
& \level{2}{2} = \Diamond (\mathsf{target2} \wedge \Diamond \mathsf{target5}) \\
& \level{3}{2} = \Diamond (\mathsf{target3} \wedge \Diamond \mathsf{target6}),
\end{aligned}
\end{equation}
which states that the robot should pick up movable objects from targets regions 1, 2, and 3, and place them in regions 4, 5, and 6 in order.  
In this example, the planner does not take into account the interactions between the gripper and objects, nor does it consider the dynamics constraints of the robot trajectory. Instead, the focus of our planner is on identifying a collision-free path in the configuration space that satisfies the hierarchical sc-LTL specifications within a 28-DOF system. This is a benchmark example proposed in~\cite{envall2023differentiable}. The problem was solved by nonlinear trajectory optimization with smoothed discrete variables, solving multi-robot handover with one object in 36.5 seconds. However, this gradient-based method relies on local information and often faces challenges in finding feasible trajectories that deviate significantly from the initial guess, leading to suboptimal solutions with unnecessary joint movements. In contrast, the considerably more complex specification with three objects handover is solved by our proposed method in approximately 11.6 seconds. In addition, our method provides a sound, complete, and lower-cost solution without relying on an initial guess of the solver.

\paragraph{Scenario 2}  
\label{Scenario 2}
In the second scenario, we designed a different layout from Scenario 1, with the robots arranged in a narrow rectangular formation. The hierarchical sc-LTL specifications are still~\eqref{eq:four-robots HLTL}. This case is more complex than the previous one due to the narrow robot workspace, requiring the planner to ensure collision avoidance and determine the necessity of handovers between robots. The result is illustrated in Fig.~\ref{fig: four-robot-handover2}. Note that in Fig.~\ref{fig: four-robot-handover2}(b) and (c), while two robotic manipulators are transferring the yellow and red objects, the bottom robot simultaneously moves to transfer the blue object, effectively minimizing the overall completion time.
\begin{figure}[!th]
    \centering
    \includegraphics[width=1\linewidth]{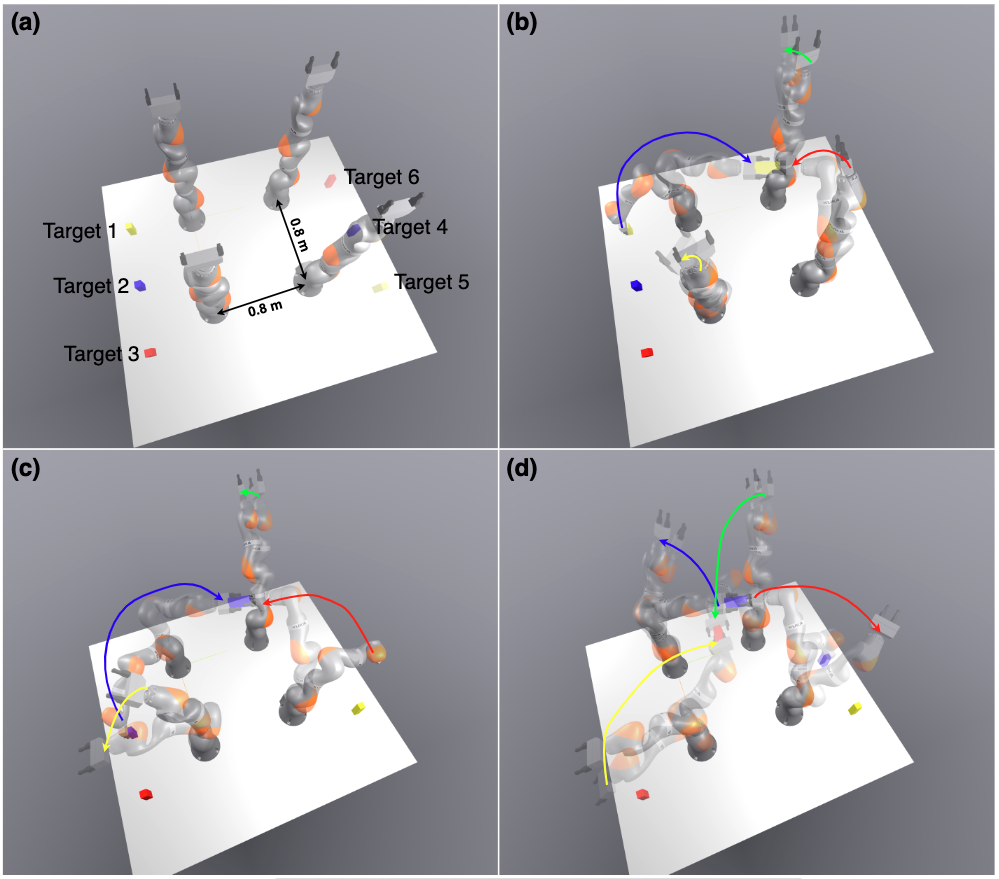}
    \caption{Setup involving four robotic manipulators operating in a rectangular formation. The robots are tasked with handling three movable objects.}
    \label{fig: four-robot-handover2}
    \vspace{-10pt}
\end{figure}

\paragraph{Scenario 3}  
In this scenario, the layout is the same as Scenario 2, with the robots arranged in a narrow rectangular formation. We introduce a conditional passable region in the middle, as illustrated in Fig.~\ref{fig: four-robot-handover3}. The task involves picking up two objects from designated pickup positions (targets 1 and 2), placing them at the corresponding placement positions (targets 3 and 4), and returning the object from target 3 to target 1 while avoiding the conditional passable region. The hierarchical sc-LTL specifications are written as:
\begin{equation}
\begin{aligned}
\label{eq:four-robots HLTL 1}
L_1: \quad &  \level{1}{1} = \Diamond (\level{1}{2} \wedge \Diamond (\level{2}{2} \wedge \Diamond \level{3}{2}))\\
L_2: \quad & \level{1}{2} = \Diamond (\mathsf{target1} \wedge \Diamond \mathsf{target3}) \\
& \level{2}{2} = \Diamond (\mathsf{target2} \wedge \Diamond \mathsf{target4}) \\
& \level{3}{2} = \Diamond (\mathsf{target3} \wedge \neg \mathsf{obstacle} ~\mathcal{U}~ \mathsf{target1}).
\end{aligned}
\end{equation}
Note that in Figs.~\ref{fig: four-robot-handover3}(b) and (c), two robots perform the handover of the yellow and blue objects within the passable regions to minimize the trajectory length. However, robots avoid the passable region when transferring the yellow object back, as shown in Fig.~\ref{fig: four-robot-handover3}(d).
\begin{figure}[!th]
    \centering
    \includegraphics[width=1\linewidth]{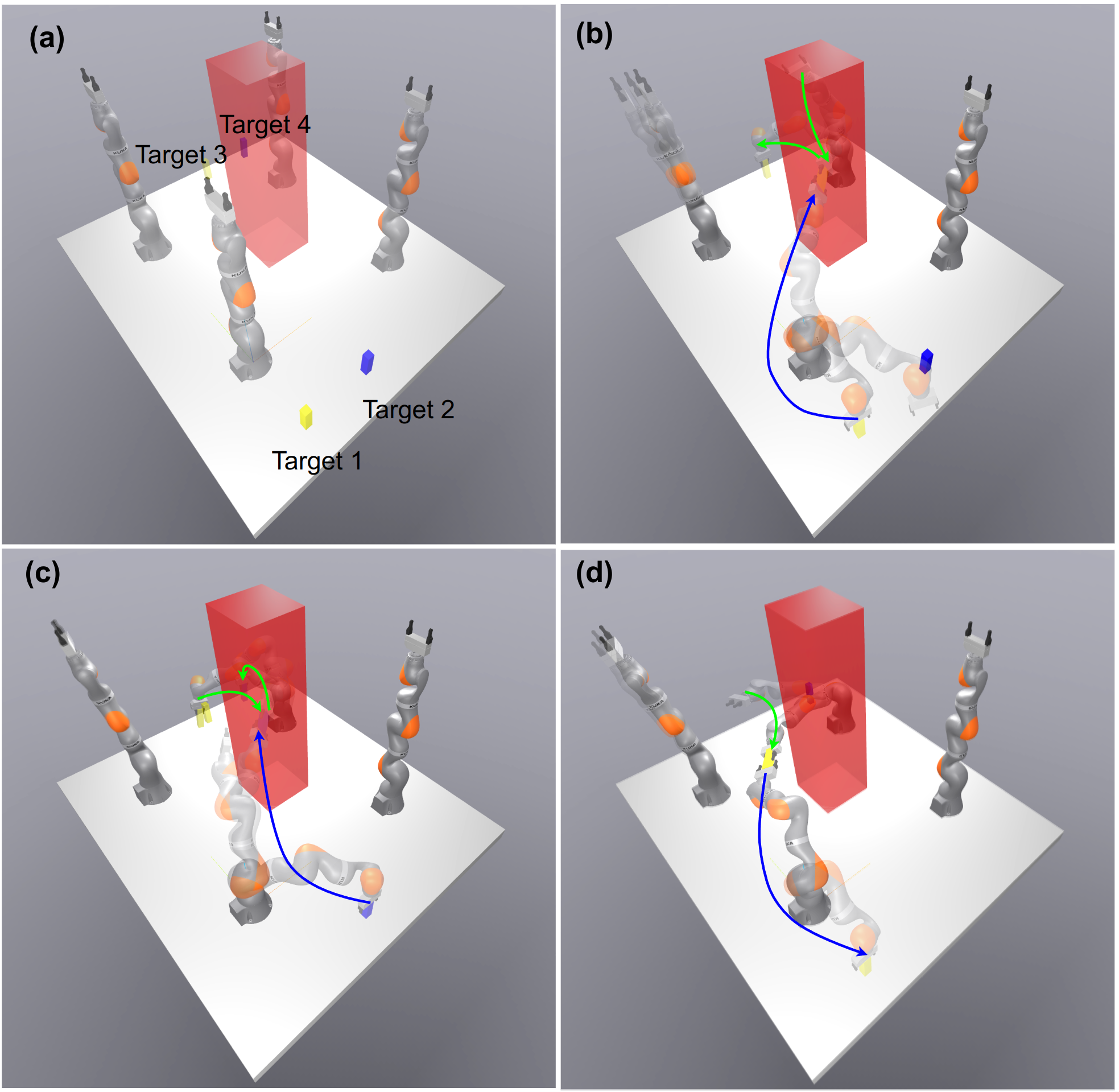}
    \caption{Setup involving four robotic manipulators operating in a rectangular formation with a passable region (red).}
    \label{fig: four-robot-handover3}
    \vspace{-10pt}
\end{figure}
\paragraph{Scenario 4}  
In the scenario, as illustrated in Fig.~\ref{fig: four-robot-handover4}, we consider more complex tasks with 3 levels of hierarchical sc-LTL. The robots are assigned two tasks involving picking up objects from one location to another. For those tasks, the robot is allowed to execute either one of them. The hierarchical sc-LTL specifications for this task are as follows:
\begin{equation}
\begin{aligned}
\allowdisplaybreaks
\label{eq:four-robots HLTL 5}
L_1: \quad &  \level{1}{1} = \Diamond (\level{1}{2} \lor \level{2}{2}) \\
L_2: \quad &  \level{1}{2} = \Diamond \level{1}{3} \wedge \Diamond \level{2}{3} \\
& \level{2}{2} = \Diamond (\level{3}{3} \wedge \Diamond \level{4}{3} )\\
L_3: \quad & \level{1}{3} = \Diamond (\mathsf{target1} \wedge \Diamond \mathsf{target5}) \\
& \level{2}{3} = \Diamond (\mathsf{target2} \wedge \Diamond (\mathsf{target4} \lor \mathsf{target6})) \\
& \level{3}{3} = \Diamond (\mathsf{target1} \wedge \neg \mathsf{obstacle} ~\mathcal{U}~ \mathsf{target5}) \\
&\level{4}{3} = \Diamond (\mathsf{target2} \wedge \Diamond (\mathsf{target1} \lor \mathsf{target5}))
\end{aligned}
\end{equation}

The solution prioritizes completing task $\level{2}{2}$, as it has a lower trajectory length cost compared to task $\level{1}{2}$. In addition, in Fig.~\ref{fig: four-robot-handover4}(c), the robot places the blue object from target 2 to target 1 instead of target 5 to minimize the trajectory length.

\begin{figure}[!th]
    \centering
    \includegraphics[width=1\linewidth]{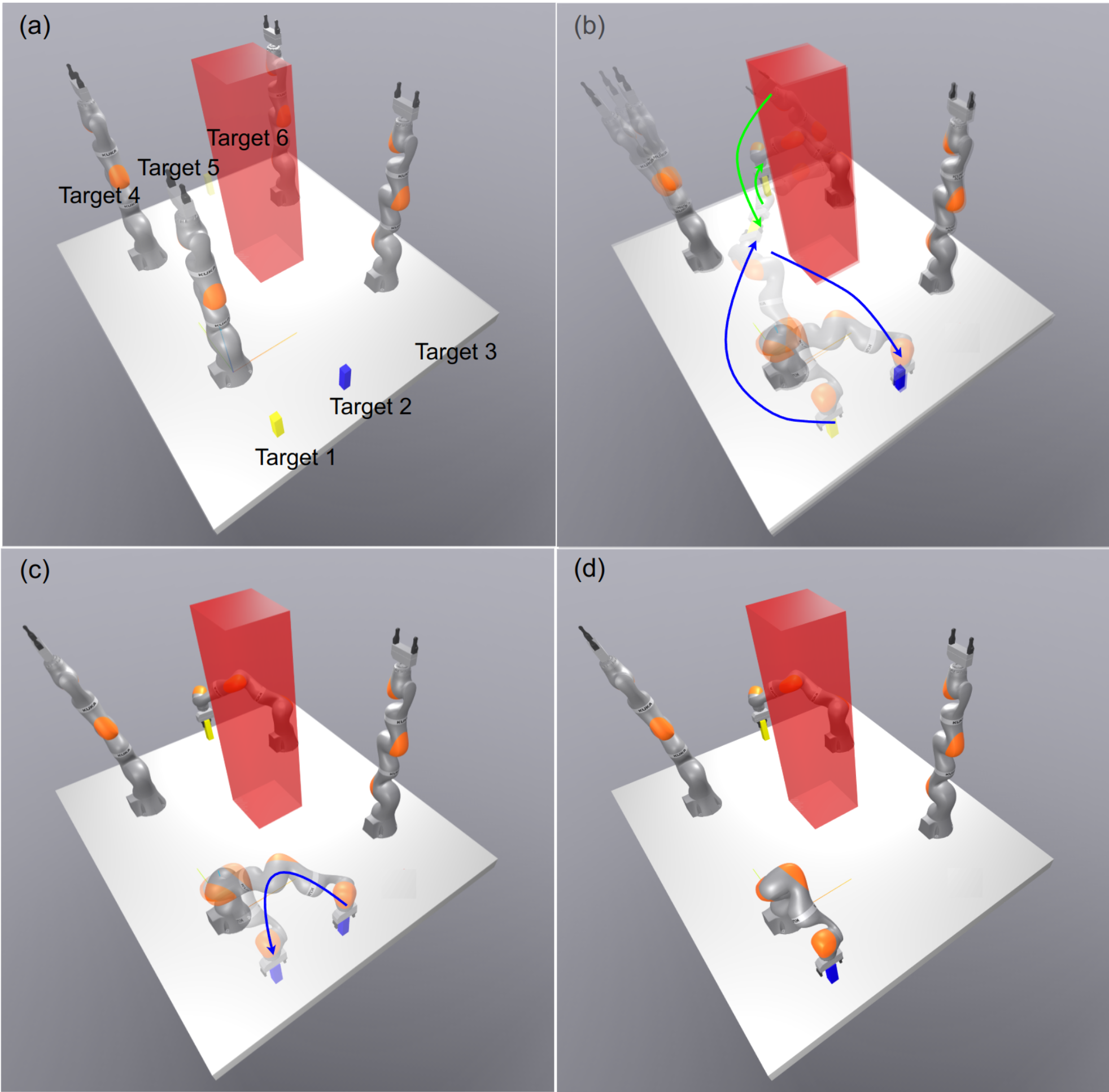}
    \caption{Setup involving four robotic manipulators operating in a rectangular formation with a passable region (red).}
    \label{fig: four-robot-handover4}
    \vspace{-10pt}
\end{figure}
\subsubsection{Mobile robots and robotic manipulators}
In prior experiments, our focus was on stationary handovers between manipulators. In this example, we explore handovers involving mobile robots and robotic manipulators, using a Boston Dynamics Spot robot and a Kuka iiwa robot. The task is to move the objects from target 1 to target 2, as depicted in Fig.~\ref{fig: spot_handover}. The hierarchical sc-LTL specification are defined as follows:
\begin{equation}
\begin{aligned}
L_1: \quad & \level{1}{1} = \Diamond \level{1}{2}\\
L_2: \quad & \level{1}{2} = \Diamond (\mathsf{target1} \wedge \Diamond \mathsf{target2}).
\end{aligned}
\end{equation}
Given that the iiwa robot can access target 1 but not target 2, a handover to the Spot robot is necessary. We assume that the Spot robot has a floating base with 3 DoF for movement and a 6-DoF manipulator for handling tasks. The goal of planning is to identify a collision-free trajectory that minimizes both the path length (L2 norm) and the total time. Our proposed method generated an optimal trajectory in just 0.378 seconds. In this trajectory, the Spot robot operates at maximum speed and performs a continuous handover with the iiwa robot, ensuring a transition of the object without any stop.

\begin{figure}[!th]
    \centering
    \includegraphics[width=1\linewidth]{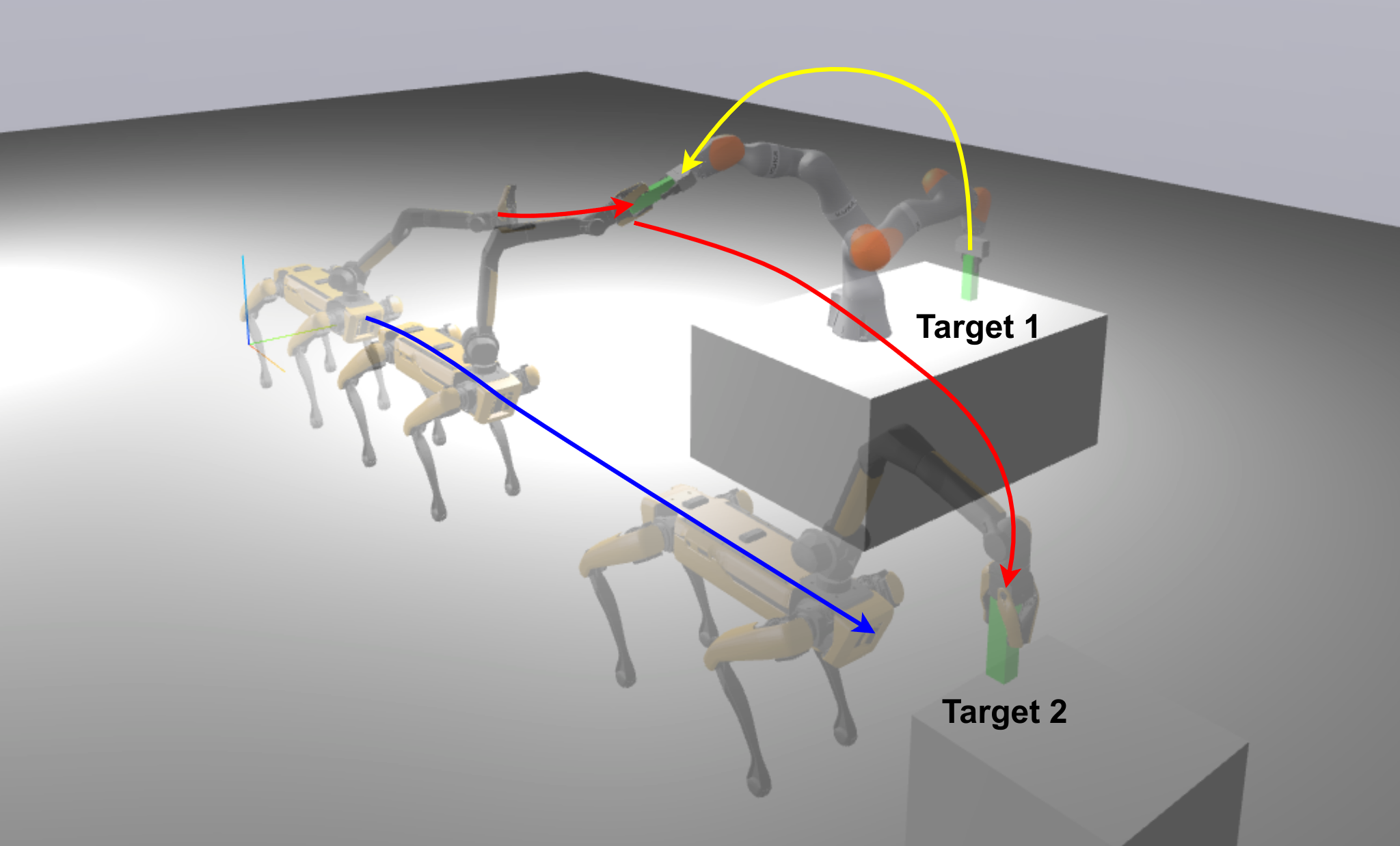}
    \caption{Example setup featuring a mobile robot paired with manipulators tasked with handling a single movable object.}
    \label{fig: spot_handover}
    \vspace{-10pt}
\end{figure}

\subsubsection{Structured industrial environments with robotic manipulators and conveyor}
Lastly, we construct a factory setting where two robotic manipulators are integrated with a conveyor system, as illustrated in Fig.~\ref{fig: two_iiwa_conveyor}. This example is designed to demonstrate the capability of our planner in structured industrial environments. The task involves transporting three objects using robotic manipulators and the conveyor system, moving them from targets 1, 2, and 3 to targets 4, 5, and 6. The hierarchical LTL specifications follow the formulation in~\eqref{eq:four-robots HLTL}. Each robot has 7 DoF, and the conveyor has one DoF. The objective of our planner is to generate a collision-free trajectory in 15 DOF that minimizes the L2 norm of the path length and the cycle times across all robots and conveyor. Our planner successfully finds a time-optimal path in just 1.019 seconds.

\begin{figure}[!t]
    \centering
    \includegraphics[width=1\linewidth]{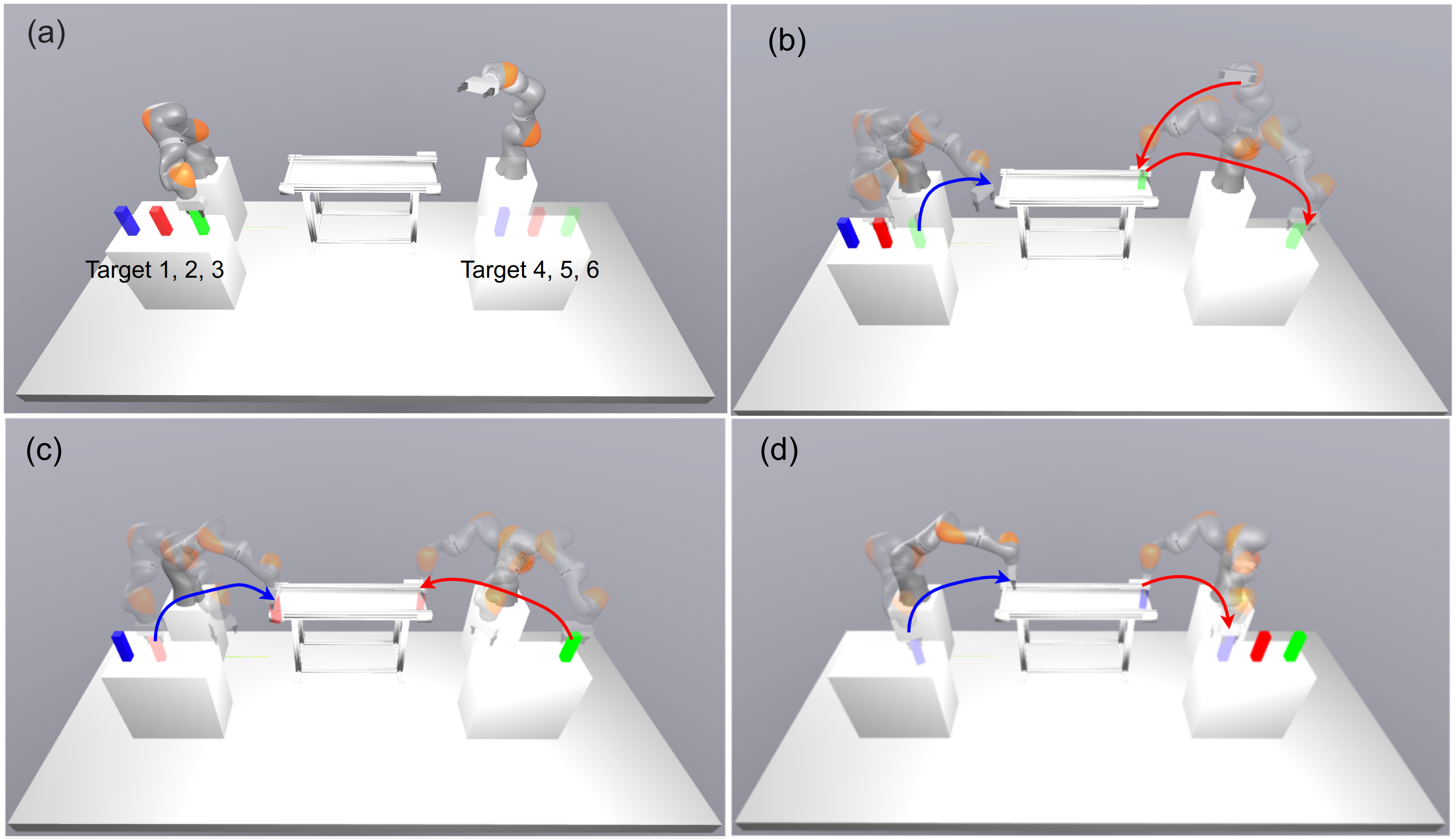}
    \caption{Example setup featuring two robotic manipulators and one conveyor with three movable objects.}
    \label{fig: two_iiwa_conveyor}
    \vspace{-10pt}
\end{figure}

\subsection{Robot Experiments}
We demonstrate the deployment of our planner on real robot hardware, using four WidowX 200 robots equipped with two-finger grippers. The hierarchical sc-LTL specifications follow the formulation~\eqref{eq:four-robots HLTL}. Our planner successfully generates collision-free trajectories, which are then executed via the ROS-based position controller provided by Tossen Robotics~\cite{interbotix_ros_manipulators}. The experimental result is illustrated in Fig.~\ref{fig: four_arms}.

\begin{figure}[!th]
    \centering
    \includegraphics[width=1\linewidth]{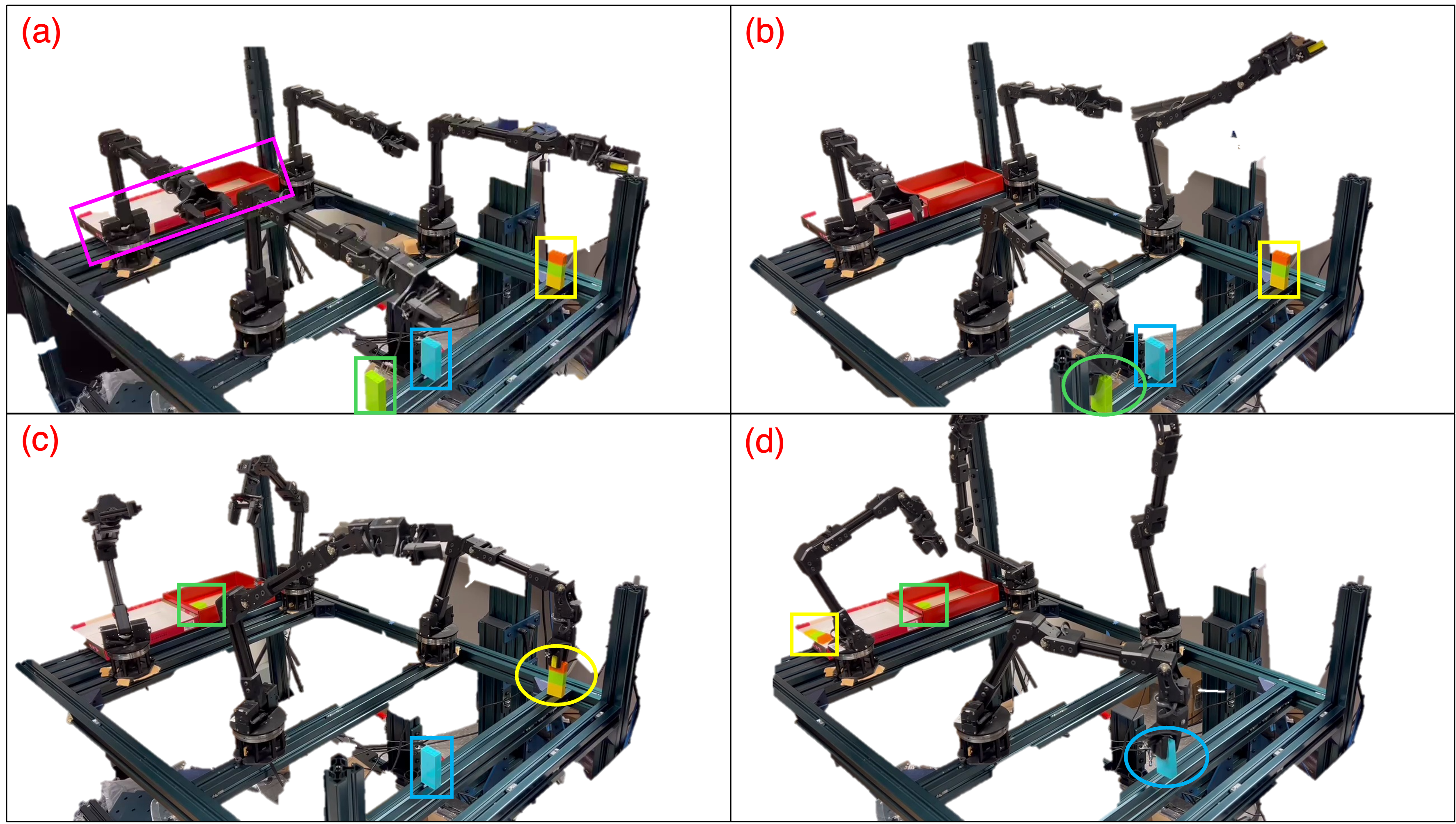}
    \caption{A setup with four manipulators designed for pick-and-place tasks.}
    \label{fig: four_arms}
    \vspace{-10pt}
\end{figure}

\subsection{Scalability}
We assess the scalability of our proposed method by analyzing solve times as the number of tasks increases, using the four-robot manipulator setup described in Scenario 2. Each task involves picking and placing a single object. The source and target locations of objects are carefully chosen to avoid the need for handovers. Tasks are evaluated in two settings: with and without handover constraints. Since handover constraints introduce binary decision variables, the problem is solved using mixed-integer convex optimization. In the absence of handover constraints, standard convex optimization is sufficient. The results in Fig.~\ref{fig: Scalability Analysis} indicate that solve times increase approximately linearly with the number of tasks when handover constraints are enforced. In contrast, without handover constraints, solve times scale more efficiently, exhibiting sublinear performance.


\begin{figure}[!th]
    \centering
    \includegraphics[width=1\linewidth]{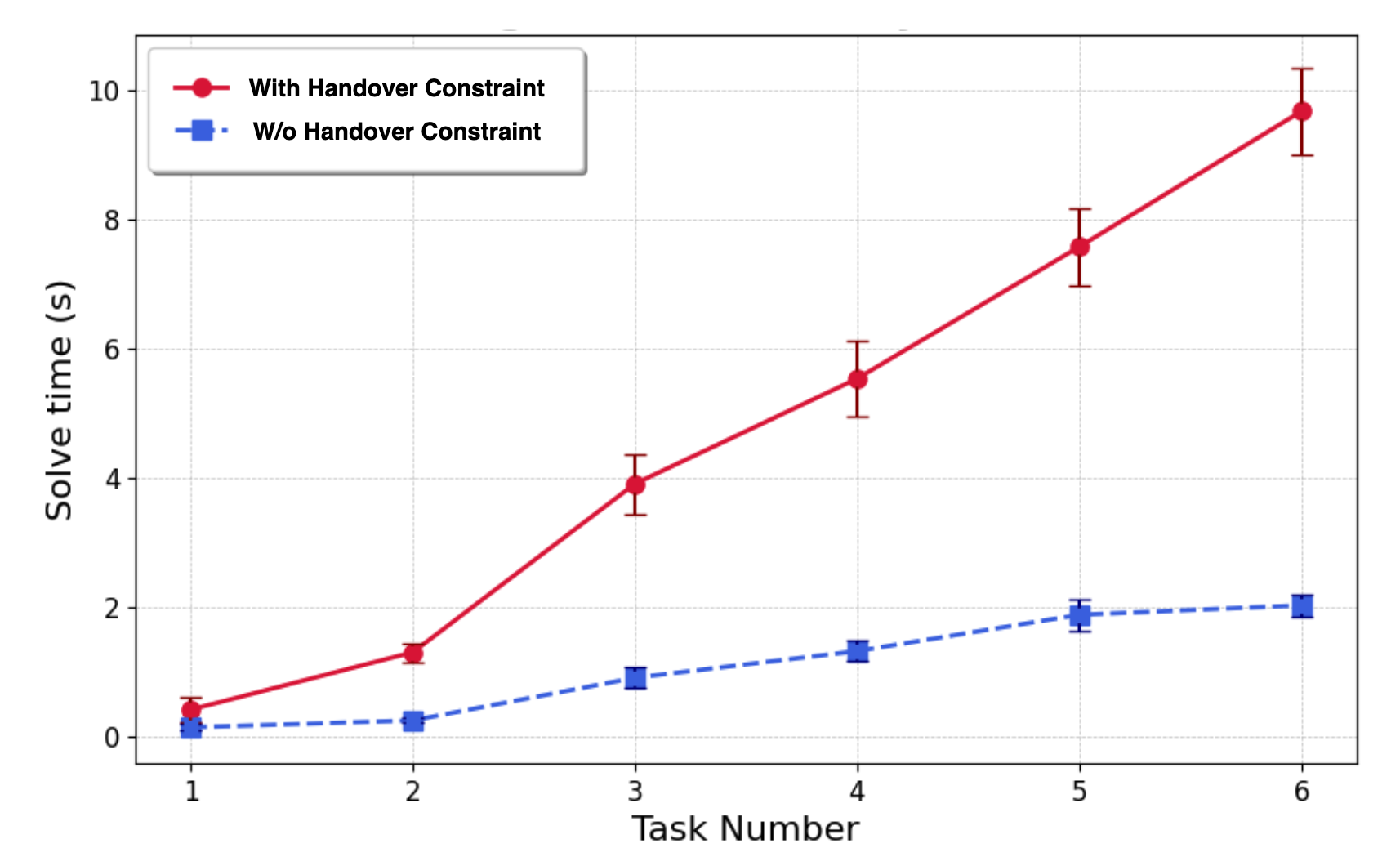}
    \caption{Scalability results of runtimes w.r.t number of tasks.}
    \label{fig: Scalability Analysis}
    \vspace{-10pt}
\end{figure}
\section{Limitations}
\label{Limitations}
In this section, we briefly outline the limitations of our proposed approach. A major drawback of our method is that the complexity of MICP scales exponentially with the number of binary variables~\cite{Mixed-Integer-Encoding}. The MICP solution for the original shortest path problem in a graph of convex sets has a very tight convex relaxation. Therefore, this problem can often be solved globally optimally with convex optimization and rounding. However, to handle handover scenarios, we introduced additional integer variables, which loosen the convex relaxation. As a result, our proposed method uses MICP to solve the hierarchical temporal logic TAMP problem with handover constraints. With the increasing complexity of the hierarchical sc-LTL specification and the number of convex sets in the transition system, the MICP scales poorly. However, in practice, this does not appear to be a significant concern. After simplifying the product graph with our graph prune method, the optimal trajectory for a 28-DOF, four-robot system, involving 719 integer decision variables, can be computed in under 10 seconds.
 
Another limitation is that we use the IRIS-NP algorithm \cite{petersen2023growing} to generate collision-free space convex sets. The algorithm provides iterative procedures for inflating convex regions of free space. However, the IRIS-NP algorithm only has probabilistically certified that the convex region is collision-free, and it may take a long time to generate the region. In addition, the generated convex region will change when the objects transfer from one robot to another. Efficiently generating certified collision-free convex regions in the configuration space and quickly adapting the region to changes in collision geometry are important areas for future research.  
 
 
\section{Conclusions}
\label{Conclusions}
We address the multi-robot task and motion planning (TAMP) problem subject to hierarchical temporal logic specifications. The key idea is to convert the optimal planning problem into a shortest path problem in a product graph, which is then solved using MICP. Unlike methods based on nonlinear trajectory optimization, sampling-based search, or learning-based approaches, our approach can handle non-convex multi-robot motion planning challenges, delivering solutions with sound and complete results within reasonable solve times.
\section*{Acknowledgments}
The authors would like to thank Prof. Oliver Kroemer for providing Trossen robots, as well as to anonymous reviewers for their insightful feedback.
\bibliographystyle{unsrtnat}
\bibliography{ref}

\end{document}